\title{Speculative Decoding Speed-of-Light: \\ Optimal Lower Bounds via Branching Random Walks}
\author{Sergey Pankratov \\
  ISTA  \\\And
  Dan Alistarh \\
  ISTA \& Red Hat AI \\
  }
\newtheorem{theorem}{Theorem}
\newtheorem{lemma}{Lemma}
\newtheorem{claim}{Claim}
\newtheorem{corollary}{Corollary}
\newtheorem{definition}{Definition}
\newtheorem{assumption}{Assumption}
\crefname{assumption}{Assumption}{Assumptions}
\newcommand{\V}{\mathcal{V}}
\newcommand{\E}{\mathbb{E}}
\newcommand{\U}{\mathcal{U}}
\newcommand{\lp}[1]{\left( #1 \right)}
\newcommand{\lb}[1]{\left[ #1 \right]}
\begin{document}

\maketitle

\begin{abstract}
Speculative generation has emerged as a promising technique to accelerate inference in large language models (LLMs) by leveraging parallelism to verify multiple draft tokens simultaneously. However, the fundamental limits on the achievable speedup remain poorly understood. In this work, we establish the first ``tight'' lower bounds on the runtime of any deterministic speculative generation algorithm. This is achieved by drawing a parallel between the token generation process and branching random walks, which allows us to analyze the optimal draft tree selection problem. 
We prove, under basic assumptions, that the expected number of tokens successfully predicted per speculative iteration is bounded as $\mathbb{E}[X] \leq (\mu + \mu_{(2)})\log(P )/\mu^2 + O(1)$, where $P$ is the verifier's capacity,  $\mu$ is the expected entropy of the verifier's output distribution, and $\mu_{(2)}$ is the expected second log-moment. This result provides new insights into the limits of parallel token generation, and could guide the design of future speculative decoding systems. Empirical evaluations on Llama models validate our theoretical predictions, confirming the tightness of our bounds in practical settings.
\end{abstract}

\section{Introduction}

Speculative decoding~\citep{leviathan2023fast, chen2023accelerating, stern2018blockwise} has emerged as a standard technique for large language models (LLMs) at scale, as it significantly reduces inference latency without altering the output distribution. The core mechanism involves using a faster \textbf{drafting process} $M_q$ to generate a linear sequence or a tree of candidate tokens. These tokens are periodically verified in parallel by a target \textbf{verifier} LLM $M_p$; throughout the paper, we assume $M_p$ is the original model we wish to speed up. By leveraging the fact that the target model can verify up to $P$ tokens efficiently in parallel, this approach allows multiple tokens to be accepted in a single iteration of the verifier $M_p$. 

The efficacy of this technique has spurred significant follow-up research focused on improving the drafting process, notably the EAGLE series of papers~\citep{li2024eagle, li2024eagle2, li2025eagle3}, which introduced feature-level extrapolation and dynamic draft trees, as well as Medusa~\citep{cai2401medusa}. On the industrial side, speculative decoding is now widely supported in high-performance inference frameworks such as vLLM~\citep{kwon2023efficient} and SGLang~\citep{zheng2024sglang}.

Despite its widespread adoption and empirical success, the theoretical foundations of speculative decoding are not well understood. In particular, it remains unclear how close existing algorithms are to the fundamental limits of the speedup achievable through this approach. Early theoretical analysis, such as the simplified modeling provided in the original work of~\citet{leviathan2023fast}, offered initial estimates based on the agreement rate between the draft and target models. Recently,~\citet{wang2024theoretical} provided a theoretical perspective by conceptualizing the decoding process via Markov chains and establishing instance-dependent lower bounds on the number of ``rejected'' (not validated) tokens. However, still we do not have a framework for characterizing the maximum achievable acceleration relative to the model's properties and system constraints.

\vspace{-0.2em}
\noindent\textbf{Contributions.} In this paper, we provide a first tight analysis of speculative decoding, revealing a fundamental limit on the performance of any deterministic speculative generation algorithm. We establish a rigorous runtime lower bound, which is equivalent to a speedup upper bound, by proving a novel relationship between the parallelism of the system ($P$) and the entropy of the target model's output distribution. Specifically, we show that, given a random initial generated prefix, the expected number of tokens successfully predicted in the next speculative iteration, denoted as  $\E\lb{X}$, can be bounded as:
\[
\E\lb{X} \leq \frac{\mu + \mu_{(2)}}{\mu^2}\log(P) + O(1),
\]
where $\mu$ is the expectation of the entropy of the output distribution under a random initial prefix, and $\mu_{(2)}$ is the expected second log-moment. 
Further, we show that this upper bound on the ``acceptance rate'' of a given setup is close to the performance of actual practical systems. 
Specifically, since our modeling is very general, our analysis can be utilized to characterize the efficiency limits of advanced speculative generation techniques like EAGLE-3.

In more detail, we propose a simplified analytical execution model, described in Section~\ref{sec:time-model}. Assuming a target LLM $M_p$ of fixed latency $T$ and with parallel token capacity $P$, the execution is iterative: a deterministic algorithm $M_q$ speculates a ``draft'' tree of tokens up to size $P$, which are then verified in parallel by the original model $M_p$. We assume the computational cost of the drafter $M_q$ is negligible (\Cref{assump:timing}) and assume that the distributions of acceptance probabilities are i.i.d. across different prefixes (\Cref{assump:iid}) to facilitate the analysis. 

Our main result (\Cref{th:logb_bound}) establishes the upper bound on $\E[X]$ under these preconditions. We show that the optimal deterministic speculation strategy involves greedily selecting the $P$ most probable token sequences from the speculative tree (\Cref{lem:optimal-tree:1}). Then, we show that the achievable speedup under this approach scales logarithmically with the parallel capacity $P$, and roughly inversely with the expected entropy $\mu$, highlighting the diminishing returns of increased parallelism and the inherent difficulty of speculating when the model's output is highly variable.

At the technical level, the bound makes a novel connection between the speculative decoding process and the theory of Branching Random Walks (BRW). By modeling the token tree with log-probabilities, denoted by $\U_{\log}$, as a BRW, we leverage established theoretical tools, such as the Many-to-One Lemma~\citep{shi2015branching}, to analyze the distribution of high-probability paths within the constraint of the verification budget $P$. This connection provides a new analytical framework for understanding the limits of speculative decoding.

We complement our theoretical analysis with practical experiments on Llama models, across standard speculative decoding benchmarks. The results show clear correlations between the performance upper bound predicted by our bound, and the real-world performance of well-optimized systems such as EAGLE-3. This validates the tightness of our theoretical predictions and demonstrate a new relationship between system's parallel token capacity, model entropy, and the achievable speedup in realistic settings.

\section{Related Work}

\paragraph{Speculative Decoding.}
The high latency of autoregressive decoding in Large Language Models (LLMs) is a significant bottleneck~\citep{shazeer2019fast}. Speculative decoding mitigates this by reducing the number of sequential calls to the target model. Building on blockwise parallel decoding~\citep{stern2018blockwise}, the technique was first formalized concurrently by~\citet{leviathan2023fast} and~\citet{chen2023accelerating}. 

Subsequent research has focused on improving the efficiency and acceptance rates of the drafting stage. One evolution has been the shift to structured, \emph{tree-based speculation}~\citep{miao2023specinfer}, which allows the exploration of multiple potential continuations.

Some draft-generation approaches introduce architectural modifications, such as Medusa~\citep{cai2401medusa}. The EAGLE framework~\citep{li2024eagle, li2024eagle2, li2025eagle3} introduced feature-level extrapolation, leveraging the target model's internal representations to generate drafts, leading to significantly higher acceptance rates. Other innovations are Staged Speculative Decoding~\citep{spector2023accelerating} or self-speculation techniques~\citep{elzinga2024layerskip,fu2024break}.

\paragraph{Theoretical Analysis of Speculative Decoding.}
The theoretical understanding of speculative decoding is still in its initial stages. First analyses~\citep{leviathan2023fast, chen2023accelerating} provided simplified models based on the expected agreement rate between the draft and target models, using a similar version of the i.i.d. assumption made in this paper. \citet{sun2023spectr} analyzed the acceptance rate through the lens of optimal transport, while \citet{wang2024theoretical} utilized Markov chain abstractions to establish instance-dependent lower bounds.

Our work contributes to this area by analyzing the fundamental limits imposed by system constraints (parallel token capacity $P$) and the inherent properties of the target model, which we isolate via the entropy parameter $\mu$. We provide bounds on the expected number of accepted tokens per iteration for any deterministic speculative algorithm, revealing the fundamental speed limits of this technique.

\paragraph{Branching Random Walks.}
Our analysis introduces a novel connection between speculative decoding and the theory of Branching Random Walks (BRW). BRW is a well-studied field in probability theory~\citep{biggins1977martingale, shi2015branching, lyons2017probability} analyzing stochastic processes where particles reproduce and move randomly. By modeling the token generation process as a BRW on the space of log-probabilities, we leverage established theoretical tools, such as the Many-to-One Lemma (related to spinal decomposition and measure changes; see, e.g.,~\citealp{shi2015branching}), to analyze the optimal selection of draft trees under a system constraint. This framework allows us to characterize the fundamental trade-offs in speculative generation.

\section{Model}\label{sec:model}

Let $\V = \{x_1, x_2, \ldots,x_{|\V|}\}$ denote the token vocabulary, of size $|\V|$.

\begin{definition}
    A \textbf{target LLM} is a function $M_{p}:\V^P \rightarrow \mathcal{P}(\V)^P $: \begin{itemize}
        \item Takes a prefix $x_{<t}$ and outputs the probability distribution for the next token $p(x_{t} \mid x_{<t} )$.
        \item has fixed runtime $T$ (see~\Cref{assump:timing}).
        \item Processes up to $P$ tokens in parallel. $P$ here denotes the \emph{parallel token capacity}: the maximum number of tokens that can be verified by the target model before noticeable latency drop-off.
        \item Requires access to the full prior context. To predict token $x_{t}$, tokens $x_{<t}$ must be present within the current or during one of the preceding calls to $M_p$.
    \end{itemize}
\end{definition} 

The objective is the following.
Given a prefix, generate a sequence of $N$ tokens $\{x_1, \dots, x_N\}$ identical to the sequence produced by the execution of target LLM $M_p$.
To generate a sequence of $N$ tokens using only $M_p$, we must run the model $N$ times sequentially.
In other words, $\text{Time}_{\text{naive}} = N \times T$.

We can obtain a faster end-to-end runtime $\delta N T$  ($\delta \leq 1$ is the inverse of speedup), by using speculative generation techniques~\citep{li2024eagle, cai2401medusa}.
In this paper, we formally define the limitations of such methods with a lower bound on $\delta$.
For example, there exists a trivial bound of $\delta \geq \frac{1}{P}$.
That is, all generated tokens need to be verified, and the target model can verify up to $P$ tokens at a time. Therefore, the verifier must be run at least $\frac{N}{P}$ times.

\subsection{Timing Model}\label{sec:time-model}

We analyze algorithm $M_{q}$ designed to mimic the behavior of a target LLM $M_{p}$.
$M_q$ must exactly replicate the output of the ``black-box'' $M_p$. 
Hence, any proposed algorithm for $M_{q}$ must verify tokens by running  $M_{p}$. 

We adopt the following execution model: The execution is divided on iterations.
During each iteration, the algorithm $M_{q}$ speculates $k$ tokens ($k \leq P$).
The tokens are then verified by the target model $M_{p}$ in parallel.
During verification, $M_p$ stochastically accepts a sequence.
The probability of the accepted sequence is exactly the same as the probability of generating this sequence naturally by $M_p$.
In addition, $M_p$ generates one new token each time the verification is run.

We make a few simplifying idealizations.
\begin{assumption}[Simplified Timing Model]
\label{assump:timing}
(a) The runtime of $M_p$ is a constant $T$. (b) The computational cost of $M_q$, excluding the verification calls to $M_p$, is negligible. (c) Verification calls are sequential.
\end{assumption}
These assumptions help us connect expected runtime with~\Cref{th:logb_bound}. A brief note on the timing assumptions:
(a) In practice, $T$ increases linearly with the prefix length. Treating $T = T(N/2)$ as a constant introduces a small deviation for runtime.
(b) Since we focus on the asymptotic lower bound, we disregard the overhead incurred by the drafting. 
Recent work~\citep{LiuLLLZHS25_pearl}, however, considers settings where the cost of drafting is not negligible.  
(C) Asynchronous verification is typically less optimal due to memory allocation overhead. To our knowledge, there are no works that show otherwise. However, it might be possible to process $x P$ tokens faster than $\lceil x\rceil T$~\citep{sheng2023flexgen}. Again, these simplifications affect the runtime by a constant factor.

\subsection{The Bounds for Deterministic Drafting}

In this section, we will formulate lower bounds on $\delta$  for any reasonable \textbf{deterministic} algorithm. 
First, we state a simplifying assumption:

\begin{definition}[Acceptance Probability]
Let $\beta(x_t|x_{<t})$ be the probability that a speculated token $x_t$ is accepted by $M_p$, given that its prefix $x_{<t}$ is accepted.
For a deterministic $M_q$, this is exactly $p(x_t|x_{<t})$ --- the probability of $M_p$ generating token $x_t$.
\end{definition}

\begin{assumption}[I.i.d. Distributions]
\label{assump:iid}
We assume that the acceptance probabilities at each step are drawn i.i.d. across different prefixes $\beta(x_t|x_{<t}) = \beta(x_t)$.
Moreover, the probability that the distribution $\beta$ is point mass is $ < 1$: $\Pr\lb{\beta(x_t) = 1} < 1$.
\end{assumption}

Although the i.i.d. assumption is unreasonable, as the context dependency is inherent in languages, it considerably simplifies the proof.
These proofs would give insight into the general case.
The second part of the assumption excludes scenarios when $M_p$ is deterministic, which, under our timing model, does not have a non-trivial speedup bounds.

Given the i.i.d. assumption, we can apply Wald's equation. Let $X_i$ denote the number of new tokens accepted during iteration $i$. Let $n$ denote the total number of iterations to generate $N$ tokens. Since $M_q$ is deterministic and the output distributions are i.i.d., $X_i$ are also i.i.d. Then, according to the Wald's equation,
\[
N \leq \E\lb{ \sum_{i=1}^n X_i } = \E \lb{ n} \E \lb{ X_1 }.
\]

This lets us lower-bound the speedup $\delta$ by the inverse of the expected tokens per iteration.
\begin{equation}\label{eq:walds:1}
\E\lb{ \text{runtime}}  = \delta NT  \geq \E\lb{ \text{n} } T \geq \frac{NT}{\E\lb{ X_1 }}.
\end{equation}
To lower bound the runtime, we must upper bound $\E[X_1]$.

\paragraph{Optimal deterministic drafting strategy.}

We now proceed to analyze $\E\lb{ X_1 }$, the expected number of tokens per iteration. 

\begin{definition}[Token Tree]
Let $\U$ be a weighted tree of degree $|\V|$ and infinite depth.
Each node $u$ in $\U$ represents a token $u$ and each edge represents a possible transition taken by $M_p$.
Each edge $(u, v)$ of the tree is weighted with the corresponding \emph{acceptance probability} $\beta_v$:
probability of token $v$ being accepted, given its parent is accepted.
$r$ is the root of $\U$ and represents the currently known prefix.
For a node $v$, $|v|$ denotes the depth of $v$ in $\U$.
\end{definition}

Let $P(v)$ be the probability that node $v$ is accepted during the current iteration.
It is the product of all conditional probabilities along the path from root $r$ to $v$:
\[
P(v) = \prod_{u \in \text{path}(r \rightarrow v)} \beta_u.
\]

\begin{definition}[Draft Tree]\label{def:draft-tree}
    A draft tree $Tree$ is a subtree of $\U$ that has at most $P$ nodes and is rooted at $r$.  
\end{definition}
Let $L(Tree)$ be the longest accepted path of a draft tree $Tree$ (including $r$).
The expected number of accepted tokens is equal to the expected length of the accepted path in $Tree$:
\[
\E\!\lb{ X_1 } 
= \E\!\lb{ L(Tree) }.
\]
For example, if $3$ new draft tokens are accepted, then $L(Tree) = 4$ since the root $r$ is counted, by definition.
The number of new tokens is also $X = 4$, since the verifier adds one new token to a list of accepted tokens.
The optimization problem for $M_q$ is therefore the following:

\textit{Choose a draft tree $ Tree \subseteq \U $, which maximizes the expected length of its accepted path. } 

Note that, in our theoretical argument, we allow $M_q$ to have full knowledge of $\U$, therefore it can always find the optimal solution.  
The above expectation is over the randomness in verification, not the choice of $\U$.

This optimization problem can be solved greedily by the following algorithm.
\begin{lemma}\label{lem:optimal-tree:1}
    Any draft tree $Tree^*$ that maximizes $\E[L(Tree)]$ contains $P$ nodes of $\U$ with the highest acceptance probability $P(v)$. Moreover, $\E[L(Tree^*)] = \sum_{v\in Tree^*} P(v).$
\end{lemma}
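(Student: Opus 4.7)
The plan is to first prove the identity $\E[L(Tree)] = \sum_{v \in Tree} P(v)$, which reduces the problem to a weighted top-$P$ selection, and then verify that the greedy top-$P$ choice is always a feasible subtree. For the identity, the key observation is that $M_p$ samples each successive token from the true conditional distribution, so at most one child of any given node is ever accepted. Hence the ``accepted region'' of any draft tree is a single root-to-leaf path (possibly trivial), and one can write $L(Tree) = \sum_{v \in Tree} \mathbf{1}[v \text{ lies on the accepted path}]$. The event that $v$ is on the accepted path coincides with every token along $\text{path}(r \rightarrow v)$ being sampled by $M_p$, which has probability $\prod_{u \in \text{path}(r \rightarrow v)} \beta_u = P(v)$. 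Linearity of expectation then yields the identity, with the root contributing $P(r) = 1$.

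Given this identity, maximizing $\E[L(Tree)]$ over draft trees of size at most $P$ reduces to maximizing $\sum_{v \in Tree} P(v)$ over subtrees of $\U$ rooted at $r$. Ignoring the subtree constraint, the maximum is obviously attained by picking the $P$ nodes of $\U$ with the largest values of $P(v)$. I would then show that this choice is automatically feasible: since each $\beta_u \in (0, 1]$, we have $P(u) \geq P(v)$ whenever $u$ is an ancestor of $v$, so any ancestor of a selected node ranks at least as high as that node. After a tie-breaking rule that prefers shallower nodes, the top-$P$ set is closed under taking ancestors, connected, and rooted at $r$, hence a valid draft tree. Optimality then follows from the standard rearrangement property: no other subset of $P$ nodes can yield a larger sum.

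The main obstacle I anticipate is the interaction of the top-$P$ rule with the subtree constraint, specifically the need to handle ties at the boundary of the top-$P$ set. Ties arise when $\beta_u = 1$ on some edge (so $P$ is constant along that edge) or when incomparable branches happen to yield equal path products; without consistent tie-breaking, the naive top-$P$ might pick a node without its ancestor and so fail to form a subtree. The shallow-first rule resolves this, since it respects the partial order induced by ancestry: whenever a descendant ties in $P$-value with one of its ancestors, the ancestor is selected first. With this fix the combinatorial argument goes through and yields both claims of the lemma simultaneously.
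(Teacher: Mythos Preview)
Your proposal is correct and follows essentially the same approach as the paper's proof: both establish the identity $\E[L(Tree)] = \sum_{v \in Tree} P(v)$ (the paper via the tail-sum formula $\E[L] = \sum_{d} \Pr[L \ge d]$ together with mutual exclusivity of acceptances at each depth, you via the equivalent indicator decomposition $L = \sum_{v} \mathds{1}[v\text{ accepted}]$ and linearity of expectation), and then argue that the monotonicity $P(\text{parent}) \ge P(\text{child})$ makes the top-$P$ selection prefix-closed and hence a valid draft tree. Your explicit shallow-first tie-breaking is slightly more careful than the paper, which simply asserts that a prefix-closed maximizer exists.
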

\begin{proof}
Recall that $|v|$ denotes the depth of the node $v$.
Fix some solution $Tree$.
We can represent the expectation of $L(Tree)$ as a sum over the depth of $Tree$.
\begin{align*}
\E[L(Tree)] & = \sum_{d=1}^{\infty} \Pr[L(Tree) \geq d].
\end{align*}
$L(T) \ge d$ if and only if a node at depth $d$ in $Tree$ is accepted.
Since $M_p$ generates a unique sequence, acceptance of two nodes at the same depth are mutually exclusive events.
Hence,
\[
\Pr[L(Tree) \ge d] = \sum_{v \in Tree, |v|=d} P(v).
\] 
Which gives us $\E[L(Tree)]  = \sum_{v\in Tree} P(v)$.
This sum is maximized by $Tree^*$. There always exists a prefix-closed $Tree^*$ since the acceptance probability of a parent is never lower:
for a node $v$ and its parent $u$, $P(u) \geq P(v)$ since $P(v) = p P(u)$ with $p \in [0, 1]$.
\end{proof}
Since $Tree^*$ is optimal for every fixed tree $\U$, algorithm $M_q$ that drafts $Tree^*$ at each iteration is therefore optimal under~\Cref{assump:timing}.
\Cref{lem:optimal-tree:1} does not require~\Cref{assump:iid}. 
The drafting strategy used by~\citet{li2024eagle2} is identical, but includes a cap on the maximum tree depth and width for practical purposes.

\paragraph{Branching random walks (BRW).}
A branching random walk (BRW) is usually governed by a 1D point process $\Xi$ on $\mathbb{R}$:
a random variable whose support is a set of locally finite subsets of $\mathbb{R}$.
Then BRW is defined as a process, where
\begin{itemize}
    \item A root (single node of generation $0$) is located in position $0$.
    \item Each node of generation $i$ creates children that move according to the point process $\Xi$ from the position of the parent.
\end{itemize}
The generation i of node $u$ is given by $|u|$ and its position by $V(u)$.

The theory of BRW is well-studied, and we will use it to characterize $\U$.
First, let us introduce the
(log-)Laplace transform for the point process $\Xi$. For $\theta > 0$ it is given by:
\begin{equation}\label{eq:log-laplace}
\psi(\theta) = \ln \E\lb{ {\sum_{u: |u| = 1} e^{-\theta V(u)}} }  .
\end{equation}

We introduce a simplified version of the many-to-one lemma.
\begin{lemma}[The Many-to-One Lemma]\label{lem:many-to-one}
    For a BRW with point process $\Xi$ and any $\theta > 0$, such that $\psi(\theta) < \infty$,
    there exists a random walk $S_n$ with $S_0 = 0$, such that for any $n \geq 1$,
    and any measurable function $g: \mathbb{R} \rightarrow [0, \infty)$, we have
    \[\E\lb{ \sum_{v: |v| = n} g(V(v)) } = \E\lb{ e^{\theta S_n + n \psi(\theta) } g(S_n) }.\]
\end{lemma}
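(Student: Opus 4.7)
The plan is to construct $S_n$ explicitly via an exponential change of measure (a tilt) applied to the offspring point process $\Xi$, and then verify the identity by induction on $n$, with the branching property of the BRW doing the heavy lifting at the induction step.

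First I would let $\nu$ be the intensity measure of $\Xi$, so that $\E\lb{\sum_{u:|u|=1} f(V(u))} = \int f(y)\, \nu(dy)$ for any non-negative measurable $f$; from the definition of $\psi(\theta)$ we have $\int e^{-\theta y}\, \nu(dy) = e^{\psi(\theta)} < \infty$. This lets me define a probability measure $Q_\theta$ on $\mathbb{R}$ by $Q_\theta(dy) = e^{-\theta y - \psi(\theta)}\, \nu(dy)$, and declare $S_n$ to be the random walk whose i.i.d.\ increments $Y_i$ have law $Q_\theta$, with $S_0 = 0$. The base case $n=1$ is then a one-line substitution:
\[
\E\lb{e^{\theta S_1 + \psi(\theta)} g(S_1)} = \int e^{\theta y + \psi(\theta)} g(y)\, e^{-\theta y - \psi(\theta)}\, \nu(dy) = \int g(y)\, \nu(dy) = \E\lb{\sum_{u:|u|=1} g(V(u))}.
\]

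For the inductive step I would condition on the first $n-1$ generations and invoke the branching property: each node $u$ at depth $n-1$ independently spawns children whose displacements from $V(u)$ follow an independent copy of $\Xi$, so regrouping by parent gives
\[
\E\lb{\sum_{v:|v|=n} g(V(v))} = \E\lb{\sum_{u:|u|=n-1} h(V(u))}, \qquad h(x) := \int g(x+y)\, \nu(dy).
\]
Applying the inductive hypothesis to $h$ yields $\E\lb{e^{\theta S_{n-1} + (n-1)\psi(\theta)} h(S_{n-1})}$, and rewriting $h(x) = \E\lb{e^{\theta Y_n + \psi(\theta)} g(x+Y_n)}$ by another use of the tilt (with $Y_n$ independent of $S_{n-1}$) collapses the expression to $\E\lb{e^{\theta S_n + n\psi(\theta)} g(S_n)}$, closing the induction.

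The argument involves no delicate estimates; the main obstacle is simply book-keeping — checking that $Q_\theta$ is a bona fide probability measure (immediate from $\psi(\theta) < \infty$) and that the independence between $S_{n-1}$ and $Y_n$ mirrors the conditional independence of $u$ and its children under the branching property. Because $g$ is only assumed non-negative and measurable, all interchanges of sum, integral, and expectation are justified by Tonelli's theorem, so no integrability subtleties arise.
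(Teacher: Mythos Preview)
Your argument is correct and is the standard proof of the many-to-one lemma: define the spine increment law by the exponential tilt $Q_\theta(dy) = e^{-\theta y - \psi(\theta)}\nu(dy)$ of the intensity measure, check the identity at $n=1$, and then induct using the branching property. All the interchanges are justified by Tonelli as you note, and the book-keeping with $h(x)=\int g(x+y)\,\nu(dy)$ is exactly right.

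There is nothing to compare against, however: the paper does not prove this lemma. It is stated as a known result imported from the branching-random-walk literature (Shi, 2015) and used as a black box in the subsequent claims. Your write-up is essentially the textbook derivation one finds in that reference.
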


\paragraph{Branching random walks for $\U$.}\label{par:brw:U}
The token tree $\U$ defined earlier~(\Cref{def:draft-tree}) is closely related to the BRWs.
That is, let $\U_{\log}$ be a tree $\U$, where the node weights $\beta_u$ are replaced by $-\log (\beta_u)$.
Under~\Cref{assump:iid} (i.i.d. distributions), $\U_{\log}$ is modeled as a BRW with point process
\[
\Xi_{\log} = \{-\log\beta_1, \ldots, -\log \beta_{|\V|}\}.
\]
The position of the point $v$ is then $V(v) = \sum_{u \in \text{path}(r \rightarrow v)} -\log(\beta_u) = -\log P(v)$.
The (log\mbox{-})Laplace transform defined in~\Cref{eq:log-laplace} simplifies to
\[
\psi(\theta) = \ln \E\lb{ {\sum_{u:|u|=1} \beta_u^\theta} }.
\]

\paragraph{The bound of $\E[X]$.}
Let $N(t)$ be a random variable that counts the number of nodes in $\U_{\log}$, whose value is $\leq t$:
\[
N(t) = \# \{ u \in \U_{\log} : V(u) \leq t \}.
\]
Let $\mu$ denote the expected entropy of the output distribution and $\mu_{(2)}$ denote the expected second moment of $-\log$:
\begin{align*}
    \mu &= \E \lb{ - \sum_{u:|u|=1} \beta_u \log \beta_u }, & \\
    \mu_{(2)} &= \E \lb{ \sum_{u:|u|=1} \beta_u \log^2(\beta_u) }.
\end{align*}
By the non-deterministic assumption~(\Cref{assump:iid}), we have that $\mu >0$.

\begin{claim}\label{claim:Nt=int-exp-dU}
    Assume $\mu > 0$, $\mu_{(2)} < \infty$.
    Let $N(t) = \# \{ u \in \U_{\log} :  V(u) \leq t \}$ denote the number of nodes of $\U_{\log}$ with value $\leq t$.
    Let $S_d$ be a spine random walk (from~\Cref{lem:many-to-one} with $\theta = 1$) for $\U_{\log}$, $\E[h(S_1)] = \E[\sum_{u \in \Xi_{\log}} e^{-u} h(u) ]$. 
    Denote $U(x)= \sum_{d=0}^{\infty} \Pr[S_d \leq x]$.
    Then,
    \[
    \E[N(t)] = \int_{0}^{t}e^x dU(x).
    \]
\end{claim}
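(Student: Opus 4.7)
The plan is to decompose $N(t)$ by generation, apply the Many-to-One Lemma at the critical parameter $\theta=1$, and then collapse the resulting sum into a Stieltjes integral against the renewal measure $U$.

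First, since the sets $\lc{u\in\U_{\log}:|u|=d}$ partition $\U_{\log}$, monotone convergence gives
\[
\E[N(t)] = \sum_{d=0}^{\infty}\E\!\lb{\#\lc{u\in\U_{\log}:|u|=d,\ V(u)\le t}}.
\]

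Second, the decisive observation is that $\psi(1)=0$: because $\beta$ is a probability distribution over children, $\E\!\lb{\sum_{u:|u|=1}\beta_u}=1$, so $\psi(1)=\ln 1=0$. This is exactly what makes $\theta=1$ the natural parameter. Applying \Cref{lem:many-to-one} with $\theta=1$ and $g(x)=\mathbf{1}[x\le t]$ then yields, for every $d\ge 0$,
\[
\E\!\lb{\#\lc{u:|u|=d,\ V(u)\le t}} = \E\!\lb{e^{S_d}\mathbf{1}[S_d\le t]}.
\]
The $d=0$ term reduces to $e^{0}\cdot\mathbf{1}[0\le t]=1$, which correctly accounts for the root of $\U_{\log}$ (consistent with $\Pr[S_0\le x]$ being a unit atom at $0$).

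Third, because $\beta_u\in[0,1]$ the spine walk has non-negative increments and $S_0=0$, so each summand is naturally a Lebesgue-Stieltjes integral on $[0,t]$; Tonelli's theorem (all integrands non-negative) then permits swapping the sum with the integral:
\[
\E[N(t)] = \sum_{d=0}^{\infty}\int_{0}^{t} e^{x}\,d\Pr[S_d\le x] = \int_{0}^{t} e^{x}\,dU(x).
\]

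The main obstacle is really just the bookkeeping at $d=0$ and the legitimacy of the sum-integral swap. A quick finiteness check handles both: $\E[N(t)]\le e^{t}U(t)$, and $U(t)<\infty$ on bounded intervals because the spine walk has strictly positive drift $\E[S_1]=\E\!\lb{\sum_u\beta_u(-\log\beta_u)}=\mu>0$ by the hypotheses of the claim, so the elementary renewal theorem applies. Beyond this, the identity is a direct consequence of the Many-to-One Lemma and I do not anticipate any serious difficulty.
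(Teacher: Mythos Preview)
Your proposal is correct and follows essentially the same route as the paper: decompose by generation, apply the Many-to-One Lemma at $\theta=1$ (where $\psi(1)=0$), rewrite each term as a Stieltjes integral against the CDF of $S_d$, and sum to obtain the integral against $U$. If anything, you are slightly more careful than the paper in handling the $d=0$ term explicitly and in justifying the sum--integral interchange via Tonelli and the finiteness of $U(t)$.
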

\begin{proof}
    Let $N_d(t) = \# \{ u \in \U_{\log} : |u| = d,  V(u) \leq t \}$ denote the number of nodes at layer $d$ with value $\leq t$.
    So,
    \[
    \E \lb{N(t)} = \E \lb{ \sum_{d=0}^{\infty} N_d(t) } = \sum_{d=0}^{\infty} \E \lb{ N_d(t) }.
    \]
    By applying the Many-to-One lemma~(\Cref{lem:many-to-one}) with identity function $\mathds{1}\lp{ V(u) \leq t }$ and $\theta = 1$ (so $\psi(\theta) = 0$),
    \begin{align*}
    \E \lb{ N_d(t) }& =  \E \lb{ \sum_{u:|u|=d} \mathds{1}\lp{ V(u) \leq t } } \\
    &= \E \lb{ e^{S_d} \mathds{1}\lp{ S_d \leq t } }.
    \end{align*}
    Moreover, $\E [S_1] = \E \lb{ \sum_{u:|u|=1} e^{-V(u)} V(u) } = \mu$.
    
    Denote CDF of $S_d$ as $F_d(x) = \Pr[S_d \leq x]$. We can expand the expectation by definition:
    \[
    \E \lb{ e^{S_d} \mathds{1}\lp{ S_d \leq t } } = \int_{0}^{t}e^x dF_d(x).
    \]
    $U(x) = \sum_{d=0}^{\infty} F_d(x)$. Thus,
    
    \[
    \E\lb{N(t)} = \sum_{d=0}^{\infty} \int_{0}^{t}e^x dF_d(x) = \int_{0}^{t}e^x dU(x).
    \]
\end{proof}

\begin{lemma}\label{lem:ENt-bounds}
    Assume $\mu > 0$ and $\mu_{(2)} < \infty$.
    For any threshold $t>0$, the expected number of nodes in $\U_{\log}$, whose value is $\leq t$ is:
    \[
    \E \lb{ N(t) } \leq
    \frac{\mu + \mu_{(2)}}{\mu^2}e^t - \frac{1}{\mu} + 1.
    \]
\end{lemma}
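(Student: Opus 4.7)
My plan is to combine the integral identity $\E[N(t)] = \int_0^t e^x\,dU(x)$ from Claim~1 with classical renewal-theoretic bounds on $U$. The key observation is that the spine walk $S_d$ is a nonnegative i.i.d.\ random walk: $\beta_u \in [0,1]$ forces $V(u) = -\log\beta_u \geq 0$, and the Many-to-One characterization with $\theta = 1$ applied to $g(y)=y$ and $g(y)=y^2$ gives $\E[S_1] = \mu$ and $\E[S_1^2] = \mu_{(2)}$. Consequently, $U(x) = \sum_{d \geq 0}\Pr[S_d \leq x]$ is the renewal function of a nonnegative renewal process with finite first two moments, so standard renewal tools apply.

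Next I would derive the two-sided sandwich
\[
\frac{x}{\mu} \;\le\; U(x) \;\le\; \frac{x}{\mu} + \frac{\mu_{(2)}}{\mu^2}, \qquad x \ge 0.
\]
For the lower bound, I apply Wald's identity to the stopping time $\tau(x) = \inf\{n \geq 0 : S_n > x\}$, noting the elementary identity $\E[\tau(x)] = \sum_{n \geq 0}\Pr[S_n \leq x] = U(x)$; since $S_{\tau(x)} > x$, this yields $\mu U(x) > x$. For the upper bound, I invoke Lorden's overshoot inequality $\E[S_{\tau(x)} - x] \leq \mu_{(2)}/\mu$, which combined with $\mu U(x) = \E[S_{\tau(x)}]$ gives the stated bound.

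Then I would rewrite the target integral by Lebesgue–Stieltjes integration by parts, carefully absorbing the atom of $U$ at $0$ (from the $d=0$ term), to obtain the exact identity
\[
\E[N(t)] = e^t U(t) - \int_0^t e^x U(x)\,dx.
\]
Plugging the Lorden upper bound into the boundary term and the Wald lower bound $x/\mu$ into the integrand, and using $\int_0^t x e^x\,dx = (t-1)e^t + 1$, the two $t e^t/\mu$ contributions cancel and the expression collapses to $(\mu+\mu_{(2)}) e^t/\mu^2 - 1/\mu$, which is even stronger than the stated bound by a free $+1$ of slack.

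The main obstacle I expect is the clean application of Lorden's inequality in this BRW-derived setting: once $S_d$ is identified as a renewal walk with the stated moments, Lorden is essentially black-box, but one must justify the overshoot bound $\mu_{(2)}/\mu$ explicitly (it is standard for any nonnegative renewal walk with finite second moment, via the size-biasing argument). A secondary technicality is bookkeeping the atom of $U$ at the origin when performing Stieltjes integration by parts; tracking it carefully confirms the displayed formula is an exact identity rather than only an inequality, so nothing is lost at this step and the cancellation of the $te^t/\mu$ terms is genuinely responsible for matching the target constant $(\mu+\mu_{(2)})/\mu^2$.
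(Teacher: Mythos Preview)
Your approach is essentially identical to the paper's: both start from the integral identity of Claim~1, integrate by parts to obtain $\E[N(t)]=e^tU(t)-\int_0^t e^xU(x)\,dx$, and then substitute two-sided renewal bounds on $U$ derived from Wald's identity and Lorden's inequality. The only difference is that the paper writes $U(x)=x/\mu+C(x)$ with $C(x)\in[1,\,1+\mu_{(2)}/\mu^2]$ whereas your sandwich is shifted down by~$1$, which is why your final bound is tighter by the extra~$+1$ you noticed; either version proves the lemma as stated.
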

\begin{proof}
    By~\Cref{claim:Nt=int-exp-dU},
    \[
    \E\lb{N(t)} = \int_{0}^{t}e^x dU(x).
    \]
    Integrate by parts. $U(0-) = 0$ and $U(x)$ is right-continuous ($U(x+) = U(x)$), so:
     \[
    \E \lb{N(t)} = e^t U(t) - \int_0^{t} U(x) \, e^x dx.
    \]
    $U(x)$ has a known bound from renewal theory (see~\Cref{app:renewal}).
    Therefore, we can write $U(x) = x/\mu + C(x)$ with $C(x) \in [1, 1 + {\mu_{(2)}}/{\mu^2}]$. 
    Substituting $U(x)$ into $\E \lb{N(t)}$, we get
    \begin{align*}
    \E \lb{N(t)} 
    &= \frac{\mu + \mu_{(2)}}{\mu^2}e^t - \frac{1}{\mu} + 1.    
    \end{align*}
\end{proof}

\begin{theorem}[Bound on Speculative Generation]\label{th:logb_bound}
Let $M_p$ be a LLM with latency $T$ and parallel token capacity $P$.
Let $M_q$ be a deterministic algorithm that generates $\geq N$ tokens, {verified} by $M_p$.
Let $X$ denote the number of tokens successfully predicted by $M_q$ during an iteration.
$\mu > 0$ and $\mu_{(2)} < \infty$.
Under~\Cref{assump:iid} and for $P \geq 1 + \frac{\mu_{(2)}}{\mu^2}$,
the expectation of $X$ is
\[
\E [X] \leq \frac{\mu + \mu_{(2)}}{\mu^2}\log (P) + O(1).
\]
Or, more precisely, 
\[
\E [X] \leq a \ln\left(\frac{P-b}{a}\right) + a + b,
\]
where $a = \frac{\mu + \mu_{(2)}}{\mu^2}$ and $b = 1 -\frac{1}{\mu}$.
\end{theorem}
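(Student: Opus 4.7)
The plan is to combine three ingredients: (i)~\Cref{lem:optimal-tree:1}, which gives $\E[X\mid \U] = \sum_{v\in Tree^*} P(v) = \sum_{v\in Tree^*} e^{-V(v)}$ in the BRW representation $\U_{\log}$; (ii) a threshold decomposition that converts the random ``top-$P$-by-$V$'' set $Tree^*$ into a deterministic sub-level set of $V$; and (iii) the Many-to-One and renewal tools set up in~\Cref{claim:Nt=int-exp-dU} and~\Cref{lem:ENt-bounds} to control that sub-level set in expectation. Once these pieces are assembled, the final bound follows from a one-parameter optimization over the threshold.

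Concretely, for any deterministic $t > 0$, every node of $Tree^*$ either lies in $\{v : V(v) \leq t\}$ or contributes at most $e^{-t}$ to the sum; combined with $|Tree^*| \leq P$ this yields the pointwise bound
\[
\sum_{v \in Tree^*} e^{-V(v)} \;\leq\; \sum_{v:\, V(v) \leq t} e^{-V(v)} \;+\; P\,e^{-t}.
\]
Taking expectation over $\U$ and applying~\Cref{lem:many-to-one} with $\theta = 1$ (so $\psi(1) = \ln \E[\sum_u \beta_u] = 0$) and test function $g(x) = e^{-x}\mathds{1}(x \leq t)$, the $e^{S_d}$ factor from Many-to-One cancels the $e^{-S_d}$ coming from $g$, leaving $\E\bigl[\sum_{v:\,V(v)\leq t} e^{-V(v)}\bigr] = \sum_{d \geq 0} \Pr[S_d \leq t] = U(t)$. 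Hence $\E[X] \leq U(t) + P\,e^{-t}$ for every $t > 0$.

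For the optimization step, I would feed in the renewal bound $U(t) \leq t/\mu + 1 + \mu_{(2)}/\mu^2 = t/\mu + (a + b)$ invoked inside the proof of~\Cref{lem:ENt-bounds}, and pick $t^\star = \ln((P-b)/a)$: this is precisely the threshold that makes $\E[N(t^\star)] \leq P$ by~\Cref{lem:ENt-bounds}, and is non-negative exactly under the hypothesis $P \geq 1 + \mu_{(2)}/\mu^2$. Substituting yields a main term of order $t^\star/\mu$ (absorbed into $a t^\star$) and a tail $P e^{-t^\star} = P a/(P-b)$ which is bounded under the hypothesis on $P$; collecting terms recovers the explicit form $\E[X] \leq a \ln((P-b)/a) + a + b$, from which the $O(1)$ version in the theorem is immediate.

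The conceptual hard part is the bridge in Step~2: passing from the random, combinatorially defined $Tree^*$ (the top $P$ nodes of $\U_{\log}$ by $V$) to a deterministic sub-level set $\{v:V(v) \leq t\}$, to which the Many-to-One identity can be applied. The one-line decomposition above handles this cleanly by absorbing the mismatch into the exponentially small error $P e^{-t}$; after that, what remains is a routine one-parameter optimization balancing the logarithmic growth of $U(t)$ against this exponentially decaying tail, plus some bookkeeping to match the constants $a$ and $b$ stated in the theorem.
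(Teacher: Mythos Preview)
Your approach is correct and genuinely different from the paper's. The paper does not fix a single threshold; it writes the layer-cake identity $P(v) = \int_0^\infty e^{-t}\,\mathds{1}(V(v)\le t)\,dt$, which turns the top-$P$ sum into the exact representation $\sum_{v\in Tree^*} P(v) = \int_0^\infty e^{-t}\min(P,N(t))\,dt$, then applies Jensen to pull the expectation inside the $\min$ and plugs in $\E[N(t)]\le ae^t+b$ from \Cref{lem:ENt-bounds}. Splitting that integral at $t^\star=\ln((P-b)/a)$ yields the precise form via the clean cancellation $(P-b)e^{-t^\star}=a$. Your single-threshold split $\sum_{Tree^*} e^{-V(v)} \le \sum_{\{V\le t\}} e^{-V(v)} + Pe^{-t}$, followed by the Many-to-One identity $\E[\sum_{\{V\le t\}} e^{-V(v)}]=U(t)$, is a more direct route that avoids both Jensen and \Cref{lem:ENt-bounds}, working straight from the renewal bound on $U$.

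Two remarks. First, your final bookkeeping does not actually recover the stated precise form: from $\E[X]\le t/\mu+(a+b)+Pe^{-t}$ at $t^\star=\ln((P-b)/a)$ you get $t^\star/\mu+(a+b)+Pa/(P-b)$, and after loosening $t^\star/\mu\le at^\star$ you still carry the extra $Pa/(P-b)$, which does not vanish (e.g.\ at $P=a+b$ it equals $a+b$). So the second display in the theorem is not reproduced exactly; what you prove is $a\ln((P-b)/a)+O(1)$ with a slightly larger constant, which suffices for the first display. Second, and more interestingly, your route is actually \emph{sharper} in leading order: optimizing $t/\mu+Pe^{-t}$ over $t$ (optimum at $t=\ln(\mu P)$) gives leading coefficient $1/\mu$, not $a=1/\mu+\mu_{(2)}/\mu^2$, because the renewal bound on $U$ already has slope $1/\mu$. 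So you prove more than the first display claims, at the cost of not matching the specific constants in the second.
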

\begin{proof}
    By~\Cref{lem:optimal-tree:1}, 
    $
    \E[X] \leq \E\lb{ \sum_{v \in Tree^*} P(v) }
    $.
    We can rewrite $P(v)$ as an integral of threshold indicator function of $V(v)$ by substituting $t = - \log x$:
    \[
    P(v) 
    = \int_0^{\infty} e^{-t} \mathds{1}\lp{ V(v) \leq t } dt.
    \]
    Let $N(t)$ be a random variable that counts the number of nodes in $\U_{\log}$, whose value is $\leq t$.
    Then, since $Tree^*$ contains $P$ nodes with smallest values:
    \begin{align*}
    \sum_{v \in Tree^*} P(v) 
    &= \int_0^{\infty} e^{-t} \sum_{v \in Tree^*} \mathds{1}\lp{ V(v) \leq t } \, dt \\
    &= \int_0^{\infty} e^{-t} \min \lp{ P, N(t) } \, dt.
    \end{align*}
    Since $N(t) \geq 0$ for all $t\geq 0$, we can swap expectation and integral and then apply Jensen's ($\min \lp{P, \cdot }$ is concave):
    \begin{align*}
    \E \lb{ \int_0^{\infty} e^{-t} \min\lp{P, N(t) } dt } \\
    = \int_0^{\infty} e^{-t} \E \lb{ \min\lp{P, N(t) }} dt \\
    \leq \int_0^{\infty} e^{-t} \min\lp{P, \E \lb{N(t)} } dt.
    \end{align*}
    Finally, applying~\Cref{lem:ENt-bounds}, and denoting $a = \frac{\mu + \mu_{(2)}}{\mu^2}$ and $b = 1 -\frac{1}{\mu}$, we have: 
    \[
    \E[X] \leq \int_0^{\infty} e^{-t} \min\lp{P, a e^t + b } dt.
    \]
    Let $t^* = \ln \lp{\lp{P - b} / a}$. Then: 
    \begin{align*}
    &\int_0^{\infty} e^{-t} \min\lp{P,  a e^t + b } dt \\
    = &\int_0^{t^*} a + be^{-t} dt \; + P\int_{t^*}^{\infty} e^{-t}dt \\
    = & a \ln\left(\frac{P-b}{a}\right) + a + b.
    \end{align*}
\end{proof}

~\Cref{th:logb_bound} shows that the maximum speedup of any deterministic speculative decoding algorithm $M_q$ grows logarithmically with parallelism $P$. The slope of this scaling is determined by the expected next-token entropy of the target model $\mu$: a lower entropy produces deeper speculative trees and larger speedups, while a higher entropy quickly limits the benefit of parallelism. This bound therefore exposes a core trade-off: parallelism can increase expected latency, but its gains are fundamentally capped by the model’s output uncertainty.

\subsection{The Limit Bounds}

In this section, we study the behavior of $\E[X]$, when $P \rightarrow \infty$.
Namely, we show that an optimal drafting algorithm under our model assumptions would successfully generate $\approx \log P / \mu$ tokens per iteration in expectation.

\paragraph{Upper bound}
First, we 
We would call a random variable or a point \emph{arithmetic} if its values are taken from a scaled integer set (a grid).
\begin{definition}
    A random variable $X$ is arithmetic if there exist $\lambda > 0$ and $c_0$, such that the sample space of $X$ is a subset of $c_0 + \lambda \mathbb{Z}$ with probability $1$. 

    Likewise, a point process $\Xi$ is arithmetic if for some $\lambda > 0$ and $c_0$, its sample space is a subset of $c_0 + \lambda \mathbb{Z}^{\infty}$ with probability $1$.
\end{definition}

\begin{assumption}[Non-arithmetic]
\label{assump:non-arithmetic}
The point process $\Xi_{\log}$ (defined in~\Cref{par:brw:U}) is non-arithmetic. 
\end{assumption}

This assumption simplifies our statements. A similar result can be demonstrated in the arithmetic case: some additional constant dependent on $\lambda$ will be involved.

\begin{theorem}[Key Renewal Theorem~\citep{grimmett2020probability}]\label{th:key-renewal}
    Let $X_1, X_2, \ldots$ be i.i.d. non-arithmetic positive random variables and $S_d = \sum_{i=1}^d X_i$ with $S_0 = 0$.
    Denote $m(t) = \E[ \max(n: S_n \leq t) ]$, the renewal function.
    Let $g: [0, \infty) \rightarrow  [0, \infty)$ be a monotonically decreasing function such that $\int_{0}^{\infty} g(x)dx < \infty$, then \[
    \lim_{t \rightarrow \infty}\int_{0}^{t} g(t-x) dm(x) = \frac{1}{\E[X_1]} \int_{0}^{\infty} g(x) dx.
    \]
\end{theorem}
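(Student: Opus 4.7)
The plan is to reduce the Key Renewal Theorem to Blackwell's Renewal Theorem, which asserts that, for i.i.d.\ non-arithmetic positive $X_i$ with finite mean $\mu = \E[X_1]$, the renewal function $m$ satisfies $m(t+h) - m(t) \to h/\mu$ as $t \to \infty$ for every fixed $h > 0$. I would first establish Blackwell, then transfer the conclusion to a general test function $g$ by a step-function approximation.

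For Blackwell's theorem I would use a coupling argument. Introduce the delayed (equilibrium) renewal process whose first arrival has density $\bar F(x)/\mu$, where $\bar F = 1-F$ is the interarrival tail; its renewal function is exactly $m^*(t) = t/\mu$, so increments already have rate $1/\mu$. The non-arithmetic hypothesis is precisely what allows one to couple the ordinary and stationary processes so that they eventually coincide at an almost surely finite meeting time $\tau$. After $\tau$ the two processes accrue identical increments, and comparison with $m^*(t+h) - m^*(t) = h/\mu$ yields $m(t+h) - m(t) \to h/\mu$.

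To pass from Blackwell to the integral statement, I would first verify it for indicator functions $g = \mathds{1}_{[0,h]}$: then $\int_0^t g(t-x)\,dm(x) = m(t) - m((t-h)^-)$, which Blackwell sends to $h/\mu = (1/\mu)\int g$. Linearity extends the identity to any nonnegative simple function with bounded support. For a general monotonically decreasing $g$ with $\int_0^\infty g < \infty$, I would sandwich $g$ between grid step functions $g_\e^- \le g \le g_\e^+$ of mesh $\e$. Monotonicity makes the gap telescoping, $\int (g_\e^+ - g_\e^-) \le \e\, g(0)$, which vanishes with $\e$, and the tail $\int_T^\infty g$ can be made arbitrarily small by integrability, which lets me truncate the step sum to finitely many terms with uniform (in $t$) control. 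Positivity of $dm$ transfers the sandwich to the convolutions, and a standard $\limsup / \liminf$ argument concludes the proof.

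The main obstacle is Blackwell's theorem itself, and in particular the construction of the coupling together with the proof that the meeting time $\tau$ is almost surely finite: this is exactly where non-arithmeticity is essential, since in the lattice case the coupling can only succeed on the grid, which is what produces the $\lambda$-dependent correction that the excerpt alludes to. Once Blackwell is available, the approximation step is routine, with the monotonicity of $g$ doing all the work needed to bound the step-function oscillation uniformly in $t$.
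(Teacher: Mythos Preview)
The paper does not supply its own proof of this theorem: it is quoted from the cited textbook and invoked as a black box in the proof of Lemma~\ref{lem:ENt-bounds-limit}. Your outline --- Blackwell's renewal theorem via an equilibrium-delay coupling, then a step-function sandwich exploiting that a nonnegative decreasing integrable $g$ is directly Riemann integrable --- is the standard textbook route and is correct, so there is nothing substantive to compare against here.
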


Throughout this section, we are using the spinal random walk defined in~\Cref{par:brw:U}~and~\Cref{claim:Nt=int-exp-dU}.
In short, for a BRW $\U_{\log}$ there exists a random walk $S_d$ called a spine random walk, whose expectation has the many-to-one property of~\Cref{lem:many-to-one}. 
In our case, $S_d$ can simply be imagined as a branch of $\U_{\log}$ chosen at random. (for more details, refer to the Spinal Decomposition Theorem, e.g., from \citet{shi2015branching}).

\begin{lemma}\label{lem:ENt-bounds-limit}
    Assume $\mu > 0$, $\mu_{(2)} < \infty$, and $\Xi_{\log}$ is not arithmetic.
    Then, 
    \[
    \E \lb{ N(t) } =
    \frac{e^t}{\mu} + o(e^t).
    \]
\end{lemma}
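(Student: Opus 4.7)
The plan is to identify the quantity $\E[N(t)]$ with a renewal convolution and then apply the Key Renewal Theorem (\Cref{th:key-renewal}) to the spine random walk. Concretely, from \Cref{claim:Nt=int-exp-dU} we already have
\[
\E[N(t)] = \int_0^t e^{x}\, dU(x),
\]
where $U$ is the potential measure $U(x) = \sum_{d=0}^{\infty} \Pr[S_d \leq x]$ of the spine walk $S_d$. The idea is to re-express this as a convolution of a decreasing, integrable kernel against $U$, which is exactly the setting of the KRT.

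First, I would verify the hypotheses of \Cref{th:key-renewal} for the spine walk. Its step distribution has density $\sum_u \beta_u \cdot \delta_{-\log \beta_u}$, so the steps are a.s.\ nonnegative (since $\beta_u \in [0,1]$), have mean $\E[S_1] = \mu \in (0, \infty)$, and by the non-degeneracy part of \Cref{assump:iid} they are not a.s.\ zero. \Cref{assump:non-arithmetic} transfers directly from $\Xi_{\log}$ to $S_1$, since $S_1$ picks one coordinate of $\Xi_{\log}$. Hence $S_d$ is a genuine nonarithmetic renewal process with finite positive mean, and the renewal function $m(t) = U(t) - 1$ satisfies the KRT.

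Next, I would set $g(y) = e^{-y}$ on $[0,\infty)$, which is monotonically decreasing with $\int_0^\infty g = 1$. Substituting $y = t - x$ in the integral from \Cref{claim:Nt=int-exp-dU} gives
\[
e^{-t}\,\E[N(t)] = \int_0^t e^{-(t-x)}\, dU(x) = \int_0^t g(t-x)\, dU(x).
\]
Splitting off the point mass of $U$ at $0$ (which contributes $g(t) = e^{-t} \to 0$) leaves $\int_0^t g(t-x)\, dm(x)$, and \Cref{th:key-renewal} yields
\[
\lim_{t \to \infty}\int_0^t g(t-x)\, dm(x) = \frac{1}{\mu} \int_0^\infty e^{-x}\, dx = \frac{1}{\mu}.
\]
Therefore $e^{-t}\,\E[N(t)] \to 1/\mu$, which rearranges to the desired $\E[N(t)] = e^t/\mu + o(e^t)$.

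The only real obstacle is bookkeeping around the KRT hypotheses, in particular confirming that passing the non-arithmetic property from $\Xi_{\log}$ to the single-coordinate law of $S_1$ is sound (it is, because a subset of a non-arithmetic support is still non-arithmetic unless the spine's law collapses onto a lattice, which would force $\Xi_{\log}$ to do the same up to a shift) and that the boundary term at $x=0$ is indeed negligible. Everything else is a clean convolution identity plus an appeal to the stated renewal theorem; no further probabilistic machinery is required.
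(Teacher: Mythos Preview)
Your proposal is correct and follows essentially the same route as the paper: start from \Cref{claim:Nt=int-exp-dU}, rewrite $e^{-t}\E[N(t)]$ as the renewal convolution $\int_0^t e^{-(t-x)}\,dm(x)$ (after separating the unit mass of $U$ at $0$), and invoke \Cref{th:key-renewal} with $g(y)=e^{-y}$ to obtain the limit $1/\mu$. Your additional verification of the non-arithmetic and positivity hypotheses for the spine walk is more careful than the paper's version, which simply asserts the KRT applies; the only quibble is that your parenthetical about ``a subset of a non-arithmetic support'' is phrased a bit loosely---the clean argument is the contrapositive: if $S_1$ were supported on a lattice then every point of $\Xi_{\log}$ (each having positive spine weight $\beta_u$) would lie on that lattice, contradicting \Cref{assump:non-arithmetic}.
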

The $o(e^t)$ in the lemma can be replaced with $O(1)$, but that unnecessarily complicates the proof (the constant is huge anyway).
\begin{proof}
    By~\Cref{claim:Nt=int-exp-dU},
    \[
    \E\lb{N(t)} = \int_{0}^{t}e^x dU(x).
    \]
    Denote $m(x) = \E[\max(d : S_d \leq  x)]$.
    Since $m(x) = \sum_{d=1}^\infty F_d(x)$, we have $U(x) = m(x) + 1$ and
    \[
    \E\lb{N(t)} = e^t\int_{0}^{t}e^{x-t} dm(x).
    \]
    Applying~\Cref{th:key-renewal}, we get 
    \[
    \lim_{t\rightarrow \infty} \frac{\E[N(t)]}{e^t} = \frac{1}{\mu}.
    \]
\end{proof}

\begin{theorem}[Bound on Speculative Generation]\label{th:logb_bound_limit}
Let $M_p$ be a LLM with latency $T$ and parallel token capacity $P$.
Let $M_q$ be a deterministic algorithm that generates $\geq N$ tokens, {verified} by $M_p$.
Let $X$ denote the number of tokens successfully predicted by $M_q$ during an iteration.
$\mu > 0$ and $\mu_{(2)} < \infty$.
Under~\Cref{assump:iid} and for $P \geq 1 + \frac{\mu_{(2)}}{\mu^2}$,
the expectation of $X$ is
\[
\E [X] \leq \frac{\log(P)}{\mu} + o(\log P).
\]
\end{theorem}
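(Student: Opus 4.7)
The plan is to follow the proof of Theorem~\ref{th:logb_bound} verbatim through the point where the estimate on $\E[N(t)]$ is invoked, and then swap in the sharper asymptotic estimate from Lemma~\ref{lem:ENt-bounds-limit}. Concretely, applying Lemma~\ref{lem:optimal-tree:1}, the identity $P(v) = \int_0^\infty e^{-t}\mathds{1}\lp{V(v)\leq t}\,dt$, Fubini, and Jensen's inequality on the concave function $\min(P,\cdot)$, I obtain exactly as before
\[
\E[X] \leq \int_0^\infty e^{-t}\,\min\lp{P,\,\E[N(t)]}\,dt.
\]
Using Lemma~\ref{lem:ENt-bounds-limit}, I would write $\E[N(t)] = e^t/\mu + g(t)$ with $g(t) = o(e^t)$.

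Next, I would split the integral at the natural cutoff $t^\ast := \log(P\mu)$, chosen so that $e^{t^\ast}/\mu = P$. On $[t^\ast,\infty)$, the crude bound $\min(P,\E[N(t)])\leq P$ gives a contribution $\int_{t^\ast}^\infty P e^{-t}\,dt = P e^{-t^\ast} = 1/\mu$, which is $O(1)$. On $[0,t^\ast]$, using $\min(P,\E[N(t)])\leq \E[N(t)]$, the contribution is
\[
\int_0^{t^\ast} e^{-t}\E[N(t)]\,dt \;=\; \frac{t^\ast}{\mu} + \int_0^{t^\ast} e^{-t} g(t)\,dt.
\]
Since $t^\ast/\mu = \log(P)/\mu + O(1)$, the theorem reduces to showing that the residual integral is $o(\log P)$.

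The main obstacle is controlling $\int_0^{t^\ast} e^{-t} g(t)\,dt$ using only the pointwise information $e^{-t}g(t)\to 0$. I would handle this by a standard slow-cutoff argument: let $\eta(t) := e^{-t}g(t)$, which is locally bounded (indeed, Lemma~\ref{lem:ENt-bounds} gives the uniform upper envelope $\eta(t)\leq \mu_{(2)}/\mu^2 + (1-1/\mu)e^{-t}$) and satisfies $\eta(t)\to 0$ as $t\to\infty$. Given $\varepsilon > 0$, choose $M$ so large that $|\eta(t)|\leq\varepsilon$ for $t\geq M$; then for all sufficiently large $P$ we have $t^\ast\geq M$ and
\[
\int_0^{t^\ast} |\eta(t)|\,dt \;\leq\; C M + \varepsilon(t^\ast - M) \;\leq\; 2\varepsilon\, t^\ast,
\]
where $C$ is a bound on $|\eta|$ on $[0,M]$. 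Hence the residual integral is $o(t^\ast) = o(\log P)$. Combining the three contributions yields the advertised bound $\E[X] \leq \log(P)/\mu + o(\log P)$.
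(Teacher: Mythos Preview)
Your proof is correct and follows exactly the paper's approach: the reduction to $\int_0^\infty e^{-t}\min\!\big(P,\E[N(t)]\big)\,dt$ via Lemma~\ref{lem:optimal-tree:1}, Fubini, and Jensen is verbatim, and both you and the paper then invoke Lemma~\ref{lem:ENt-bounds-limit}. The only difference is that you spell out the splitting at $t^\ast=\log(P\mu)$ and the slow-cutoff bound on the residual integral, whereas the paper compresses that entire step into the single phrase ``Substituting gives us\ldots''; your version is in fact more careful here.
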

\begin{proof}
    By~\Cref{lem:optimal-tree:1}, 
    $
    \E[X] \leq \E\lb{ \sum_{v \in Tree^*} P(v) }
    $.
    We can rewrite $P(v)$ as an integral of threshold indicator function of $V(v)$ by substituting $t = - \log x$:
    \[
    P(v)
    = \int_0^{\infty} e^{-t} \mathds{1}\lp{ V(v) \leq t } dt.
    \]
    Let $N(t)$ be a random variable that counts the number of nodes in $\U_{\log}$, whose value is $\leq t$.
    Then, since $Tree^*$ contains $P$ nodes with smallest values:
    \begin{align*}
    \sum_{v \in Tree^*} P(v) 
    &= \int_0^{\infty} e^{-t} \sum_{v \in Tree^*} \mathds{1}\lp{ V(v) \leq t } \, dt \\
    &= \int_0^{\infty} e^{-t} \min \lp{ P, N(t) } \, dt.
    \end{align*}
    Since $N(t) \geq 0$ for all $t\geq 0$, we can swap expectation and integral and then apply Jensen's ($\min \lp{P, \cdot }$ is concave):
    \begin{align*}
    \E \lb{ \int_0^{\infty} e^{-t} \min\lp{P, N(t) } dt } \\
    = \int_0^{\infty} e^{-t} \E \lb{ \min\lp{P, N(t) }} dt \\
    \leq \int_0^{\infty} e^{-t} \min\lp{P, \E \lb{N(t)} } dt.
    \end{align*}
    By~\Cref{lem:ENt-bounds-limit}, $ \E \lb{ N(t) } ={e^t}/{\mu} + o(e^t).$ Substituting gives us:
    \begin{align*}
    \E[X] &\leq  
    \frac{\log P}{\mu} + o \lp{\log P}.        
    \end{align*}
\end{proof}

\paragraph{Lower bound}

Denote $T_P$ as the $P$-th smallest value in $\U_{\log}$.
First, we give a rough asymptotic bound on $\E[T_P]$.
\begin{lemma}\label{lem:topb-bounds-limit}
    Assume $\mu > 0$, $\mu_{(2)} < \infty$, and $\Xi_{\log}$ is not arithmetic.
    Then, 
    \[
    \E \lb{ T_P } \geq
    \log P - \log \log P - O(1).
    \]
\end{lemma}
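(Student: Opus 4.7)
The plan is to apply the tail-sum identity $\E\lb{T_P} = \int_0^\infty \Pr\lb{T_P > t}\,dt$ (valid since $T_P \geq 0$, as every value in $\U_{\log}$ is non-negative) after producing a pointwise lower bound on the survival function of $T_P$.

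The key observation is the duality $\lc{T_P \leq t} = \lc{N(t) \geq P}$: the $P$-th smallest value is at most $t$ precisely when at least $P$ nodes of $\U_{\log}$ sit at positions $\leq t$. Markov's inequality then gives
\[
\Pr\lb{T_P > t} \;\geq\; 1 - \frac{\E\lb{N(t)}}{P}.
\]
Plugging in the non-asymptotic upper bound from \Cref{lem:ENt-bounds} yields $\Pr\lb{T_P > t} \geq 1 - (ae^t + b)/P$ with $a = (\mu+\mu_{(2)})/\mu^2$ and $b = 1 - 1/\mu$.

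Next, I would calibrate the threshold $t^\star := \log P - \log\log P - \log a$, chosen so that the residual $ae^{t^\star}/P$ equals $1/\log P$. Restricting the tail-sum integral to $[0, t^\star]$ and lower-bounding the remainder by zero gives
\[
\E\lb{T_P} \;\geq\; \int_0^{t^\star}\!\!\lp{1 - \frac{ae^t+b}{P}}\,dt \;=\; t^\star - \frac{a(e^{t^\star}-1)}{P} - \frac{b\,t^\star}{P}.
\]
Each of the three subtracted terms is $o(1)$ as $P \to \infty$ (they are $1/\log P - a/P$ and $O(\log P / P)$ respectively), so $\E\lb{T_P} \geq t^\star - o(1) = \log P - \log\log P - O(1)$, as claimed.

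There is no substantial obstacle: the heavy lifting (renewal-theoretic control of $\E\lb{N(t)}$) has already been done in \Cref{lem:ENt-bounds}, and the remainder is Markov plus a clean calibration of $t^\star$. The one minor point to watch is that $b$ may be negative, but the signed bound above remains valid; alternatively one can absorb this via $ae^t + b \leq (a+|b|)e^t$ for $t \geq 0$, which does not affect the conclusion. If one instead invokes the sharper \Cref{lem:ENt-bounds-limit} (which requires \Cref{assump:non-arithmetic}), the constant $a$ in $t^\star$ is replaced by $1/\mu + o(1)$, which is why the non-arithmetic hypothesis is included in the statement for consistency with the rest of the section.
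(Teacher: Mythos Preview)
Your proof is correct and follows essentially the same route as the paper: tail-sum identity, the duality $\{T_P > t\} = \{N(t) < P\}$, Markov's inequality, and a calibrated threshold $t^\star \approx \log P - \log\log P$. The only cosmetic differences are that you integrate the pointwise Markov bound over $[0,t^\star]$ whereas the paper uses monotonicity of $\Pr[N(t)<P]$ to pull out the factor $t^\star\,\Pr[N(t^\star)<P]$, and you invoke the non-asymptotic \Cref{lem:ENt-bounds} instead of the asymptotic \Cref{lem:ENt-bounds-limit}; as you correctly observe, this means your argument does not actually require the non-arithmetic hypothesis.
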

\begin{proof}
    Throughout this proof, assume $P$ is large enough.
    Using the tail-sum formula for expectations, 
    \begin{align*}
    \E[T_P] &= \int_0^\infty \Pr\lb{T_P > t} dt \\
    &= \int_0^\infty \Pr\lb{N(t) < P} dt.     
    \end{align*}
    Let $t^* = \log(\mu P) - \log \log (P)$.
    \[\int_0^\infty \Pr\lb{N(t) < P} dt \geq t^* \Pr\lb{N(t^*) < P}.  
    \]
    Next, we apply Markov inequality,
    \[
    \Pr\lb{N(t^*) < P} \geq 1-\frac{\E[N(t^*)]}{P}.
    \]
    Finally, by~\Cref{lem:ENt-bounds-limit},
    \[
    \frac{\E[N(t^*)]}{P} \leq \frac{1+o(1)}{\log P}.
    \]
    Which gives us:
    \begin{align*}
    \E[T_P] &\geq t^* \lp{1 - \frac{1+o(1)}{\log P}} \\
    &= \log P - \log \log P - O(1).
    \end{align*}
    
\end{proof}

\begin{lemma}\label{lem:depth-vs-topb}
    Assume $\mu > 0$, $\mu_{(2)} < \infty$, and $\Xi_{\log}$ is not arithmetic.
    \[
    \E[X] \geq \frac{\E[T_P]}{\mu}.
    \]
\end{lemma}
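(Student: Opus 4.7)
The plan is to recognize $X$ as a first-passage time of the spine walk above the random threshold $T_P$ and then apply Wald's identity. By~\Cref{lem:optimal-tree:1}, $Tree^*$ consists of the $P$ nodes of $\U_{\log}$ with smallest value, so $v\in Tree^*$ iff $V(v)\le T_P$. If $\sigma_0=r,\sigma_1,\sigma_2,\ldots$ denotes the sequence emitted by $M_p$, then the depth-$d$ node on the accepted path lies in $Tree^*$ exactly when $V(\sigma_{\le d})\le T_P$, so $X$ equals the first $d\ge 1$ at which $V(\sigma_{\le d})>T_P$.

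Under~\Cref{assump:iid}, the increments $V(\sigma_{\le d})-V(\sigma_{\le d-1})=-\log\beta_{\sigma_d}$ are i.i.d., and because $\sigma_d$ is sampled with probability $\beta_{\sigma_d}$ the process $V(\sigma_{\le d})$ has exactly the law of the spine walk $S_d$ of~\Cref{claim:Nt=int-exp-dU}, with $\E[S_1]=\mu$. Thus, in distribution, $X=\min\{d\ge 1 : S_d>T_P\}$. Applying Wald's identity then gives $\E[S_X]=\mu\,\E[X]$, and combining this with the pointwise inequality $S_X>T_P$ yields $\mu\,\E[X]=\E[S_X]\ge\E[T_P]$, which is the claimed bound.

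The main obstacle is justifying Wald, since $T_P$ is a function of the entire tree --- including the spine values $S_1,S_2,\ldots$ themselves --- so $X$ is not a priori a stopping time in the natural filtration of $S$. To handle this, I would invoke the spinal decomposition that underlies~\Cref{lem:many-to-one}: under the size-biased law, the spine walk is independent of the collection of off-spine subtrees. Because the positive drift $\mu>0$ ensures that only finitely many spine values ever lie below $T_P$, one can either couple to an auxiliary independent copy of the spine tail, or condition on the off-spine structure and treat the spine's contribution to $T_P$ as an $O(1)$ perturbation of an otherwise independent threshold. Either route turns $X$ into a stopping time in an augmented filtration up to a vanishing correction, so Wald's identity yields the stated inequality in the asymptotic regime $P\to\infty$ in which this lemma feeds into~\Cref{th:logb_bound_limit}.
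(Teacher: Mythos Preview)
Your approach is essentially the paper's: both recognize that the iteration ends when the accepted path first exits $Tree^*$, i.e., when the spine walk $S_d$ first exceeds $T_P$, and then combine the pointwise inequality $S_X\ge T_P$ with Wald's identity $\E[S_X]=\mu\,\E[X]$. The paper phrases this via the frontier $\delta(Tree^*)$ and simply asserts the Wald step without comment.

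You are right to flag the stopping-time issue the paper glosses over, but your proposed resolution is both overcomplicated and partly wrong. Under the spinal decomposition the spine increment at step $i$ and the off-spine sibling positions at step $i$ come from the \emph{same} realization of $\beta^{(i)}$, so the spine walk is \emph{not} independent of the off-spine subtrees; invoking a size-biased independence, coupling to an auxiliary spine tail, or an ``$O(1)$ perturbation'' argument is unnecessary and would only deliver an asymptotic statement, whereas the lemma is exact. The clean fix uses monotonicity: since increments $-\log\beta\ge 0$, every node $u$ with $V(u)<V(v_n)$ lies outside the subtree rooted at the depth-$n$ spine node $v_n$. Hence the event $\{X>n\}=\{v_n\in Tree^*\}=\{\#\{u:V(u)<V(v_n)\}<P\}$ is measurable with respect to the $\sigma$-field $\mathcal F_n$ generated by $(\beta^{(1)},\sigma_1),\ldots,(\beta^{(n)},\sigma_n)$ together with all off-spine subtrees branching off at depths $\le n$. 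The next increment $(\beta^{(n+1)},\sigma_{n+1})$ is independent of $\mathcal F_n$ with mean $\mu$, so $X$ is a genuine stopping time in this enlarged filtration and Wald applies exactly, for every $P$.
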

\begin{proof}
Recall the notation from~\Cref{par:brw:U}. 
For $u \in \U_{\log}$, $\beta_u$ is the accepted probability for node $u$, given its parent is accepted.
$P(u)$ is the accepted probability of node $u$.
$V(u) = -\log P(u)$ is the position of $u$ in the branching random walk.

Let $Tree^*$ be some optimal subtree of $\U_{\log}$ of size $P$ (from~\Cref{lem:optimal-tree:1}). 
Let $\delta(Tree^*) = \{u\in \U_{\log} : u \notin Tree^*, parent(u) \in Tree^*\}$ denote the \emph{frontier} of $Tree^*$.
$\delta(Tree^*)$ is a set of all possible candidates for the root of the next iteration.
More precisely, $\{P(u) : u \in \delta(Tree^*\})$ is a distribution, and $P(u)$ is the probability that $u$ is the next token accepted by the verifier $M_p$.
Hence, $\sum_{u \in \delta(Tree^*)} P(u) = 1$.

The maximum value in $Tree^*$ is denoted by $T_P = \max_{v \in Tree^*}(P(v))$.
By construction of $Tree^*$, for every $u \in \delta(Tree^*)$, $P(u) \geq T_P$.
Hence, we have
\[
\sum_{u \in \delta(Tree^*)} P(u)V(u) \geq T_P.
\]

Finally, by Wald's, we get
\[
\E[\sum_{u \in \delta(Tree^*)} P(u) V(u)] =
\mu \E[X].
\]
\end{proof}

As a corollary of~\Cref{lem:topb-bounds-limit,lem:depth-vs-topb} and~\Cref{th:logb_bound_limit}, we get
\begin{corollary}\label{cor:EX-limit}
    Under the assumption of~\Cref{th:logb_bound_limit},
    \[
    \E[X] \approx \frac{\log P}{\mu}.
    \]
\end{corollary}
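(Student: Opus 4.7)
The plan is to sandwich $\E[X]$ between a matching upper and lower bound, both of the form $\frac{\log P}{\mu}(1 + o(1))$ as $P \to \infty$, so that the corollary's informal ``$\approx$'' statement follows immediately. All three ingredients are already in place, so no new machinery is needed: \Cref{th:logb_bound_limit} supplies the upper side, while \Cref{lem:depth-vs-topb} and \Cref{lem:topb-bounds-limit} combine to supply the lower side.

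First, I would invoke \Cref{th:logb_bound_limit} directly to write
\[
\E[X] \leq \frac{\log P}{\mu} + o(\log P),
\]
which is exactly the upper half of the claim. Then for the lower half, I would apply \Cref{lem:depth-vs-topb} to get $\E[X] \geq \E[T_P]/\mu$, and substitute the estimate
\[
\E[T_P] \geq \log P - \log \log P - O(1)
\]
from \Cref{lem:topb-bounds-limit}. Dividing by $\mu$ yields
\[
\E[X] \geq \frac{\log P}{\mu} - \frac{\log \log P}{\mu} - O(1/\mu) = \frac{\log P}{\mu} - o(\log P).
\]

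Putting the two bounds together, both sides agree at leading order $\frac{\log P}{\mu}$, with only lower-order $o(\log P)$ slack, which is precisely what ``$\E[X] \approx \log P / \mu$'' means. I would conclude by stating the sandwich explicitly, e.g.\ $\E[X] = \frac{\log P}{\mu}\bigl(1 + o(1)\bigr)$, so that the corollary is justified. There is essentially no hard step here: the whole proof is a two-line glue argument, and the only thing worth flagging is that the two $o(\log P)$ terms arise from different sources (the renewal asymptotic in \Cref{lem:ENt-bounds-limit} on the upper side, and the $\log \log P$ loss from the Markov-based tail bound in \Cref{lem:topb-bounds-limit} on the lower side), yet both vanish in the same asymptotic regime, which is what makes the sandwich close.
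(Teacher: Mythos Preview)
Your proposal is correct and matches the paper's approach exactly: the paper states the corollary as an immediate consequence of \Cref{th:logb_bound_limit} (upper bound) together with \Cref{lem:topb-bounds-limit} and \Cref{lem:depth-vs-topb} (lower bound), which is precisely the sandwich you describe. There is nothing to add.
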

There is a nice way to interpret the formula. 
A static tree with $P$ nodes and average degree $e^\mu$ would have depth $\E[X]$.
That is, the relation between $e^\mu$ and $\E[X]$ is approximately the same as between the degree of a static balanced tree and its depth. 

\subsection{The Bound for Deterministic Drafter with Imperfect Knowledge}

In previous sections, we assume that the drafter $M_q$ has full knowledge of the target model's output probabilities ($\U$).
This allows us to derive the hard limits for any algorithm that predicts the target model $M_p$.

However, this assumption does not reflect how these systems operate in practice.
Namely, in standard speculative decoding~\citet{leviathan2023fast}, we would normally have a smaller draft model, trained to predict the target model's output.
In turn, this motivates the question: how does the relation between the drafter predictions $q$ and target's distribution $p$ affect the maximum achievable acceptance rate? 
In this section, we analyze the performance of a drafter who only has access to its own distribution $q(x_t | x_{<t})$. 
We derive a lower bound on the expected number of tokens per iteration $\E[X]$, analogously to~\Cref{cor:EX-limit}, with the expected entropy $\mu_{CE} = \E[H(P || Q)] = \E[-\sum p \log q]$.

\paragraph{Model}

In this section, we keep the model from~\Cref{sec:model} unchanged. 
The only modification is the information available to the drafter $M_q$.
Instead of observing the verifiers distribution $p(x_t \mid x_{<t})$, the drafter sees \emph{speculated distributions} $q(x_t \mid x_{<t})$.
In particular, the drafter algorithm is oblivious to the distribution $p$ (even from the previous iteration).

Under~\Cref{assump:iid}, we also assume that $q$ are i.i.d., or formally:
\begin{assumption}[I.i.d. Speculated Distributions]
\label{assump:iid_q}
The speculated distributions $q$ are i.i.d. across different prefixes $q(x_t \mid x_{<t}) = q(x_t)$.
Moreover, the probability that the distribution $q$ is point mass is $ < 1$: $\Pr\lb{q(x_t) = 1} < 1$.
\end{assumption}
Let $\U_q$ denote a token tree with edges labeled with $q$ instead of $p$.
We simply write $q(v)$ to denote the probability on an edge $(u, v)$. 
The \emph{speculated probability} of a node is therefore:
\[
Q(v) = \prod_{u \in \text{path}(r \rightarrow v)} q(u).
\]
The \emph{value} of a node in $\U_q$ is $V_q(v) = -\log(Q(v))$.
The output of the optimal drafter $M_q$ is $Tree_q^*$ --- $Tree^*$ from~\Cref{lem:optimal-tree:1} for $\U_q$.
$T_q(P)$ denotes the $P$-th smallest value in $U_q$.

\paragraph{Lower bound}

Let $\Xi_{(q > 0)} = \{u : |u| = 1,\ q(u) > 0\}$ be a set of nodes in the point process with non-zero probability of speculation.
Let $\mu_{CE}$ denote the expected cross-entropy of the output distribution $p$ and the speculated distribution $q$, conditioned on $q > 0$.
Let $\Pr[q = 0]$ denote the probability of $\{q = 0\}$. Formally,
\begin{align*}
    \mu_{CE} 
        &= \E\!\left[\, - \log q(u) \ \middle|\ q>0 \right] \\[4pt]
        &= \E\!\left[
            \frac{- \sum_{\Xi_{(q > 0)}} p(u)\, \log q(u)}
                 {\sum_{\Xi_{(q > 0)}} p(u)}
        \right], \\[8pt]
    \E\!\left[\Pr[q=0]\right]
        &= \E\!\left[
            \sum_{u:\, |u|=1} p(u)\, \mathds{1}\!\left( q(u)=0 \right)
        \right].
\end{align*}

It is common in speculative decoding to have a drafting model with a smaller vocabulary than the target model.
Hence, we separate two scenarios: a) $p(x) > 0$, while $q(x) = 0$ (usually caused by out-of-vocabulary misses) and $q(x) > 0$ (normal speculative decoding).  
Under this setup, we can prove a lower bound for the expected number of tokens per iteration $\E[X]$:
\begin{lemma}\label{lem:cross_entropy_UB}
    Under~\Cref{assump:iid,assump:iid_q},
\[
\E[X] \ge
\begin{aligned}[t]
&\min\Bigg(
      \frac{1}{\E\big[\Pr[q=0]\big]},\\[4pt]
&\qquad\frac{\log P}{\mu_{CE}} - o(\log P)
\Bigg).
\end{aligned}
\]
\end{lemma}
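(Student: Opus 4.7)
The plan is to analyse one explicit drafter---the greedy top-$P$ drafter on $\U_q$, i.e.\ $Tree_q^*$ from~\Cref{lem:optimal-tree:1} applied to $\U_q$---since exhibiting one drafter attaining the bound suffices to lower-bound the optimum. Two failure modes are tracked along the true verifier path $R$ sampled from $p$ with $R_0 = r$: the out-of-vocabulary time $\tau_q = \min\{d \geq 1 : q(R_d) = 0\}$ and the tree-exit time $\tau_{\text{tree}} = \min\{d \geq 1 : R_d \notin Tree_q^*\}$. Because $q(R_d) = 0$ forces $Q(R_d) = 0$ and hence pushes $R_d$ out of the top-$P$, one has $\tau_{\text{tree}} \leq \tau_q$ and $X = \tau_{\text{tree}}$. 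Under~\Cref{assump:iid,assump:iid_q}, the indicators $\mathds{1}(q(R_d) = 0)$ along $R$ are i.i.d.\ Bernoulli with mean $\alpha = \E[\Pr[q=0]]$, so $\tau_q$ is geometric with $\E[\tau_q] = 1/\alpha$; this supplies the first term inside the $\min$.

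For the second term, I would apply the full Section~3.3 machinery to $\U_q$ itself: under~\Cref{assump:iid_q}, $\U_q$ is a BRW whose expected entropy $\mu_q > 0$, so~\Cref{lem:topb-bounds-limit} applied to $\U_q$ yields $\E[T_q(P)] \geq \log P - O(\log \log P)$. Conditional on the non-OOV event $\{q(R_i) > 0\}$, the increment $-\log q(R_i)$ has expectation exactly $\mu_{CE}$ by definition, hence along the stopped path $R_1, \ldots, R_{\min(\tau_q-1,\,D)}$ one has $\sum_i -\log q(R_i) \leq D\,\mu_{CE} + o(\log P)$ by the law of large numbers. Choosing $D = \log P/\mu_{CE} - f(P)$ with $f(P) = o(\log P)$ large enough to absorb both the $O(\log \log P)$ slack in $T_q(P)$ and the LLN fluctuation, one gets, on a high-probability event $E^*$, that $V_q(R_i) < T_q(P)$ for every $i \leq \min(\tau_q - 1,\, D)$; prefix-closure of $Tree_q^*$ then forces $\tau_{\text{tree}} \geq \min(\tau_q, D+1)$, so $X \geq \min(\tau_q, D)$ on $E^*$. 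Taking expectations,
\[
\E[X] \geq (1-o(1))\,\E[\min(\tau_q, D)] = (1-o(1))\cdot\frac{1-(1-\alpha)^D}{\alpha},
\]
which by an elementary calculus check equals $\min(1/\alpha, D) - o(\min(1/\alpha, D))$ in both regimes $D\alpha \to 0$ and $D\alpha \to \infty$, yielding the claimed bound.

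The main obstacle is the joint concentration used above: $T_q(P)$ is a functional of the entire tree $\U_q$, while $V_q(R_i)$ depends only on the single spine traversed by $R$. I would make this rigorous by conditioning on the subtrees hanging off $R$'s path (which are independent of $R$ by the BRW construction), so that~\Cref{lem:ENt-bounds-limit} applied to $\U_q$ controls $T_q(P)$ unchanged, and pairing it with a Chernoff-type tail bound on the sum $\sum_{i \leq D} -\log q(R_i)$. The smooth interpolation between the two regimes when $1/\alpha$ and $\log P/\mu_{CE}$ are comparable, together with the size of the LLN fluctuation window, is the reason the bound is stated with $o(\log P)$ slack rather than $O(1)$.
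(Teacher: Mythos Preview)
Your argument is broadly workable, but the paper takes a much shorter route that sidesteps exactly the obstacle you flag. Instead of running concentration on $T_q(P)$ jointly with an LLN on the spine increments $-\log q(R_i)$, the paper conditions on the \emph{type} of the stopping event: let $\widetilde w \in \delta(Tree_q^*)$ be the exit node of the verifier path and split $\E[X] \ge \min\bigl(\E[X \mid q(\widetilde w)=0],\,\E[X \mid q(\widetilde w)>0]\bigr)$. On $\{q(\widetilde w)=0\}$ the stopping index is the first OOV hit, so Wald's gives $1/\E[\Pr[q=0]]$. On $\{q(\widetilde w)>0\}$ the exit node lies outside the top-$P$ in $\U_q$, hence $V_q(\widetilde w) \ge T_q(P)$ deterministically (the frontier inequality from \Cref{lem:depth-vs-topb}); applying Wald's to $V_q(\widetilde w)=\sum_{i\le X}(-\log q(R_i))$ with increment mean $\mu_{CE}$ (each step has $q>0$ on this event) yields $\E[X\mid q(\widetilde w)>0]\ge \E[T_q(P)]/\mu_{CE}$, and then \Cref{lem:topb-bounds-limit} finishes. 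No pointwise control of $V_q(R_i)$, no high-probability event $E^*$, and no computation of $\E[\min(\tau_q,D)]$ is needed. Your approach pays for the pointwise route twice: first in the joint-concentration step you identify as the main obstacle, and second in the interpolation step, where $(1-(1-\alpha)^D)/\alpha$ genuinely loses a constant factor against $\min(1/\alpha,D)$ when $D\alpha$ is bounded---a regime the conditional split never enters. The upside of your route is that it is more ``sample-path'' in flavor and would still go through if Wald's were unavailable, but here the paper's Wald-plus-frontier argument is both shorter and tighter.
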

\begin{proof}
Recall the notation from the proof of~\Cref{lem:depth-vs-topb}. 
$\delta(Tree)$ is the frontier of $Tree$.

First, separate $\E[X]$ into two cases.
Let $\widetilde{w} \in \delta(Tree_q^*)$ denote the last accepted token in this iteration.
Then
\[
\E[X] \geq \min\bigl(\E[X \big| q(\widetilde{w})= 0 ],\! \E[X \big| q(\widetilde{w}) > 0]\bigr).
\]

Case A.
This can be modeled as a series of i.i.d. steps $I_i$.
Let $p_i$ and $q_i$ be two distributions generated in step $i$.
Then, $I_i = \mathds{1}\lp{ q_i(Y_i) = 0 }$, where $Y_i$ is drawn with distribution $p_i$.
Let $\tau_{q=0}$ be the first time $I_i = 1$.
We can use Wald's equation to find its expectation:
\[
\E[\tau_{q=0}] {\E\lb{I_1}} = 1.
\]
Finally, $\E[X \mid q(\widetilde{w}) = 0 ] = \E[\tau_{q=0}]$ and $\E\lb{\Pr[q = 0]} = \E[I_1]$.

Case B.
Similar to~\Cref{lem:depth-vs-topb}, we observed that for every $u \in \delta(Tree_q^*)$, $V_q(u) \geq T_q(P)$.
Then applying Wald's, we get
\[
\E[X \mid q > 0] \geq \frac{\E[T_q(P)]}{\mu_{CE}}.
\]
Finally, with~\Cref{assump:iid_q}, we can apply~\Cref{lem:topb-bounds-limit} to $T_q(P)$ to obtain the desired bound. 
\end{proof}
In practice, $\E\big[\Pr[q=0]\big]$ is usually very small.
This gives the following estimate for $\E[X]$ for speculation without full knowledge:
\[
\frac{\log P}{\mu} \lesssim \E[X] \lesssim \frac{\log P}{\mu_{CE}}.
\]

\section{Experiments}

\begin{table}[ht!]
\centering
\setlength{\tabcolsep}{3pt}
\caption{Values of the parameters $\mu$ and $\mu_{(2)}$  across models and tasks, estimated with temperature $1.0$, for Llama 3.1 8B Instruct, Llama 3.3 70B Instruct, DeepSeek R1 Distill Llama 8B, and Qwen3 8B}
\label{table:mu-C:topp1}
\begin{tabular}{lcccc}
\toprule
 Model & Dataset & Parameters \\
\midrule
L3-8B       & HumanEval & $\mu=0.279, \mu_{(2)}=0.777$  \\
            & MT-bench  & $\mu=1.088, \mu_{(2)}=6.654$  \\
            & GSM8K     & $\mu=0.636, \mu_{(2)}=2.912$  \\
            & CNN/DM    & $\mu=0.577, \mu_{(2)}=1.531$  \\
            & NQ        & $\mu=0.837, \mu_{(2)}=2.926$  \\
            & \textbf{Mean} & $\mu=0.683, \mu_{(2)} = 2.960$\\
\midrule
L3-70B      & HumanEval & $\mu=0.136, \mu_{(2)}=0.238$  \\
            & MT-bench  & $\mu=0.179, \mu_{(2)}=0.383$  \\
            & GSM8K     & $\mu=0.126, \mu_{(2)}=0.199$  \\
            & CNN/DM    & $\mu=0.146, \mu_{(2)}=0.266$  \\
            & NQ        & $\mu=0.179, \mu_{(2)}=0.382$  \\
            & \textbf{Mean} & $\mu=0.153, \mu_{(2)} = 0.29$\\
\midrule
DS-8B       & HumanEval & $\mu=0.530, \mu_{(2)}=1.252$ \\
            & MT-bench  & $\mu=0.633, \mu_{(2)}=1.951$ \\
            & GSM8K     & $\mu=0.233, \mu_{(2)}=0.531$ \\
            & CNN/DM    & $\mu=0.601, \mu_{(2)}=1.650$ \\
            & NQ        & $\mu=0.771, \mu_{(2)}=2.397$ \\
            & \textbf{Mean} & $\mu=0.554, \mu_{(2)} = 1.556$\\
\midrule
Q3-8B       & HumanEval & $\mu=0.178, \mu_{(2)}=0.313$ \\
            & MT-bench  & $\mu=0.369, \mu_{(2)}=0.814$ \\
            & GSM8K     & $\mu=0.202, \mu_{(2)}=0.369$ \\
            & CNN/DM    & $\mu=0.328, \mu_{(2)}=0.691$ \\
            & NQ        & $\mu=0.518, \mu_{(2)}=1.297$ \\
            & \textbf{Mean} & $\mu=0.319, \mu_{(2)} = 0.697$\\
\bottomrule
\end{tabular}
\end{table}

\paragraph{Setup and Objectives.}
To validate our theoretical results, we first evaluate the key expected entropy ($\mu$) and expected second log-moment ($\mu_{(2)}$) for popular models such as Llama 3.1 8B (L3-8B) Instruct and Llama 3.3 70B (L3-70B) Instruct~\citep{grattafiori2024llama}, DeepSeek R1 Distill Llama 8B (DS-8B)~\citep{deepseek2025}, and Qwen3 8B (Q3-8B)~\citep{qwen3technicalreport2025}.
We evaluate these parameters across several task types: code generation, conversation, math problem solving, summarization, and question answering. For robustness, we chose the benchmarks used by the EAGLE~\citet{li2024eagle} series of papers: HumanEval~\citep{chen2021evaluating}, MT-bench~\citep{zheng2023judging}, GSM8K~\citep{cobbe2021training}, CNN/Daily Mail~\citep{nallapati2016abstractive}, and Natural Questions (NQ)~\citep{kwiatkowski2019natural}. 
Full experiments take a few hours on a standard multi-GPU server. 

\paragraph{Entropy measurements.}
~\Cref{table:mu-C:topp1} shows the estimated expected entropies ($\mu$) and expected second log-moments ( $\mu_{(2)}$ ), for~\Cref{th:logb_bound}.
We compute these values for each model and benchmark with temperature $1$ for all runs. 
The parameters are computed over the output distributions of the target model running on the corresponding dataset ($80$ samples per dataset). 
When substituted in~\Cref{th:logb_bound}, these results provide evaluations for which models and tasks offer the highest potential for parallelization.

The results in~\Cref{table:mu-C:topp1} show both lower and more stable entropy and second-moment bounds for the larger and more accurate Llama 3.3 70B model, across all tasks. By contrast, the 8B models have both less stable and higher parameter values, across all tasks, but especially so in the multi-turn MT-Bench benchmark. 
In terms of our bounds, this implies a higher parallelization potential for the larger model. 

Across different architectures, we observe that Qwen3 consistently exhibits a lower expected entropy ($\mu$) across almost all benchmarks compared to Llama 3.1. According to our bounds, this suggests that Qwen3 is better suited for speculation.

\begin{figure}[ht!]
    \centering
    \includegraphics[width=\columnwidth]{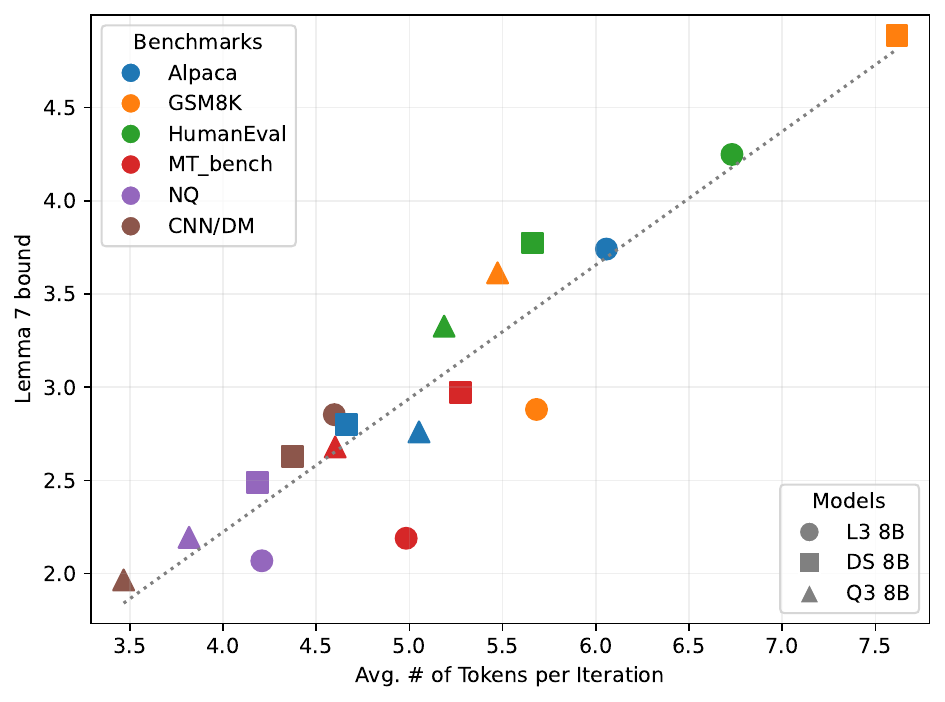}
    \caption{Validating~\Cref{lem:cross_entropy_UB} with EAGLE-3 across different models. The $\pm o(\log P)$ term is neglected for this plot. Speculation size is set to $P=60$. The interpolation line shows the near-linear correspondence between the EAGLE speedup results and lower bound. }
    \label{fig:8B-eagle-vs-CE-bound-correlation}
\end{figure}

\paragraph{Upper bound VS EAGLE-3}
Next, in~\Cref{fig:8B-eagle-vs-CE-bound-correlation}, we evaluate how well~\Cref{lem:cross_entropy_UB}, our lower bound on expected accepted tokens, captures the expected number of tokens successfully speculated by EAGLE-3 across different 8B models. 
For each iteration, we run EAGLE-3 with the sampling method described in~\Cref{lem:optimal-tree:1}, which corresponds to the original EAGLE-2 sampling procedure, but without a limit on the depth and width of the tree.
This illustrates the potential of the algorithm if we neglect the speculation time ($\tau$ from~\citet{li2025eagle3}).
We fix the speculation size to $P = 60$.
The plot shows a clear linear relationship between our bound and the EAGLE-3 results, indicating that actual speedup scales predictably with the~\Cref{th:logb_bound} bound across dataset.
However, the bound is smaller, indicating 
a constant gap between our bound and the actual performance of EAGLE-3. 
This is likely the $\pm o(\log P)$ term.
\begin{figure*}[ht!]
    \centering
    \includegraphics[width=\textwidth]{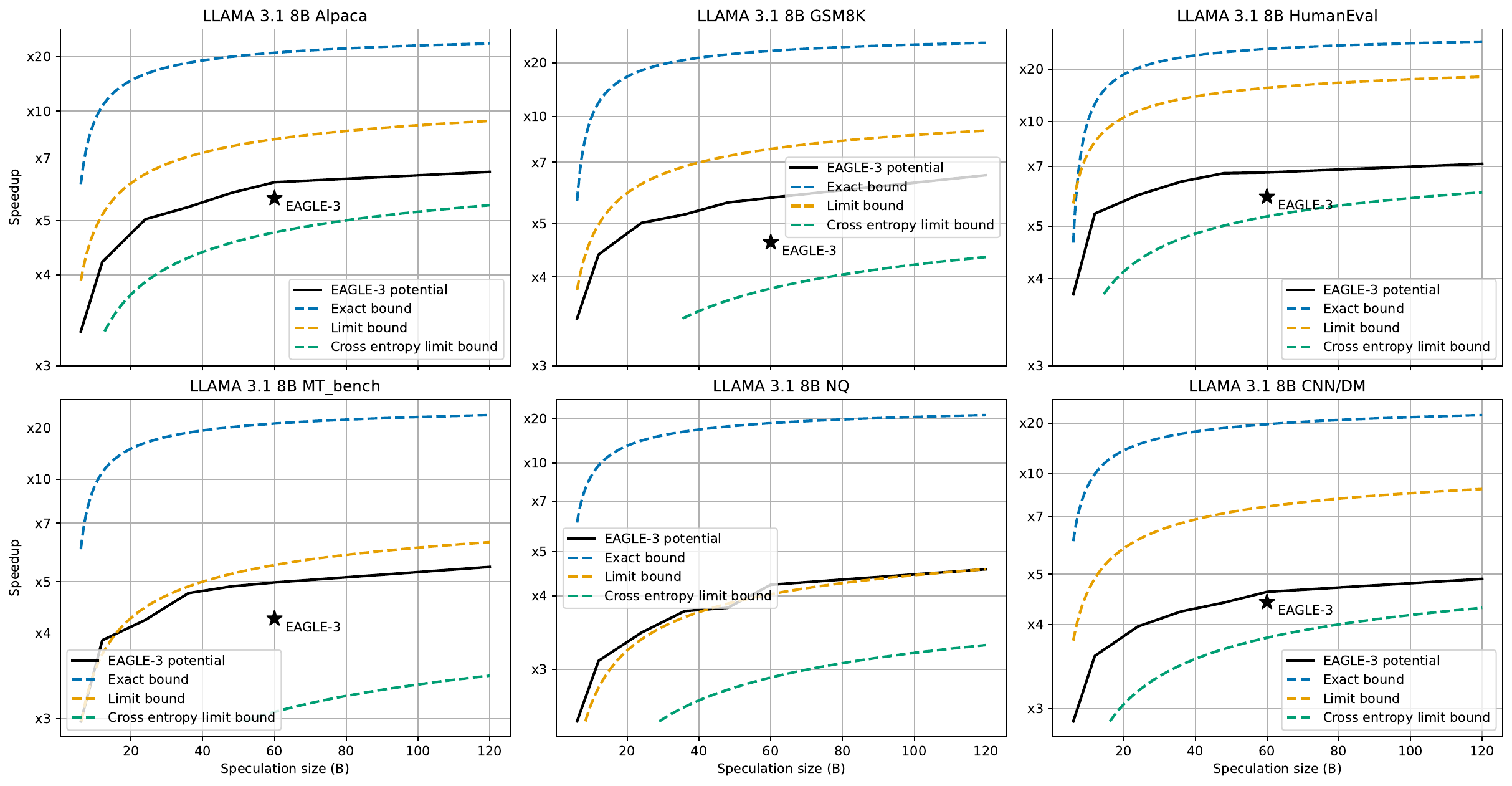}
    \caption{Comparing~\Cref{cor:EX-limit,lem:cross_entropy_UB} with (EAGLE-3, LLaMA 3.1 Instruct (8B)) the $\pm o(\log P)$ term is neglected for this plot. The blue line represent the precise bound from\Cref{th:logb_bound}}
    \label{fig:8B-eagle-vs-limit-bound}
\end{figure*}

Finally, ~\Cref{fig:8B-eagle-vs-limit-bound} shows how the speedup achieved by the EAGLE-3 algorithm varies with the speculation size $P$ when speculating on the LLaMA 3.1 8B Instruct, on $6$ different benchmarks. The solid black line represents the best achievable speedup for EAGLE-3 under our simplifying assumptions. Specifically, we compute this bound by assuming a negligible speculation overhead, but use the actual EAGLE-3 acceptance rate, given their trained speculators. Further, the dashed lines corresponds to the theoretical bounds on maximal speedup. Specifically, the blue lines shows the exact upper bounds from~\Cref{th:logb_bound}; the orange lines represent the limit bounds ($P \rightarrow \infty$) established in~\Cref{cor:EX-limit}; and the green line shows the lower speedup limit for the imperfect knowledge drafting of~\Cref{lem:cross_entropy_UB} (also for $P \rightarrow \infty$). 
For both orange and green lines, we ignore the $o(\log P)$ error, and hence these graphs should be viewed with caution. The star marker denotes the performance of EAGLE-3 according to~\citet{li2025eagle3} (configured for practical use). The star is missing from a Natural Questions benchmark since it was not presented in the  EAGLE-3 paper. 

First, we notice that the gap between the real EAGLE-3 performance under realistic condition (star point) and the potential bound (solid black line) is relatively low. However, EAGLE-3 still leaves a large gap to the theoretical exact lower bound, represented by the blue dashed lines. 
Specifically, we postulate that this is because of the gap between the optimal speculator adopted in our analytical argument, and the one adopted by the EAGLE-3 implementation. 
Secondly, the gap between the limit line (orange) and the imperfect knowledge line (green) is mostly due to an oracle speculator assumption.
Generally, this suggests that about $\times 2$ difference between the theoretical bound and EAGLE-3 is caused by a speculation error.
This suggests that there is still significant room for improvement with regards to designing optimal practical speculation algorithms.

\section{Discussion}

We established theoretical limits on the speedup achievable by deterministic speculative decoding algorithms, by drawing a novel connection to the theory of Branching Random Walks (BRW). 
Our main result (\Cref{th:logb_bound}) shows that the expected number of accepted tokens per iteration scales logarithmically with the verification capacity $P$, which in turn implies significant diminishing returns from increasing system parallelism. Specifically, exponentially increasing the computational budget $P$ will only yield a linear improvement in speedup, highlighting that massive parallelism alone cannot overcome the probabilistic bottlenecks of speculation.

Roughly, the bound is inversely related to the expected entropy $\mu$ of the target model. This quantifies the intuition that speculation is inherently difficult when the model's output is uncertain. Furthermore, the constant factor $(\mu + \mu_{(2)})/\mu^2$ closely bounds the impact of variability in the entropy distribution. More precisely, a higher expected second log-moment $\mu_{(2)}$ (greater variance in log-probabilities) reduces the maximum achievable speedup, indicating that the specific distribution of uncertainty impacts performance beyond just the average entropy.
More generally, modeling the token tree of log-probabilities as a BRW provides a  new analytical lens for understanding speculative generation. This  allows us to rigorously analyze the distribution of high-probability paths under resource constraints, moving beyond previous simplified models based on average acceptance rates.

Finally, the empirical evaluations demonstrate a strong correlation between our theoretical bounds and the performance of state-of-the-art systems like EAGLE-3. This suggests that current deterministic algorithms are approaching the fundamental limits under the assumptions studied, and can guide future system design, emphasizing the importance of model characteristics (reducing effective entropy) over merely increasing the parallelism budget $P$.

\section{Limitations}

Our theoretical framework relies on a number of simplifying assumptions.

\paragraph{I.i.d. Assumption.} The analysis depends on \Cref{assump:iid}, which states that the distributions of acceptance probabilities are independent and identically distributed across prefixes. This is not exact for language models, where context dependency is inherent and output entropy will vary   with the preceding text. This assumption is needed for the application of BRW theory and Wald's equation. Extending the analysis to stationary and ergodic processes, which better capture the dynamics of language modeling, remains an important direction for future work.

\paragraph{Simplified Timing Model.} We adopt a simplified timing model (\Cref{assump:timing}) that assumes a constant latency $T$ for the verifier and negligible computational cost for the drafter. In practice, the verifier's latency increases with the prefix length (due to KV cache growth); at the same time, highly-efficient drafting mechanisms such as the one in EAGLE-3 do have negligible overhead. These assumptions allow us to isolate the fundamental limits imposed by the probabilistic verification process, but abstract away practical engineering trade-offs.

\paragraph{Perfect Knowledge and Deterministic Drafting.} Our upper bounds are derived for the optimal deterministic drafting strategy, assuming that the drafter has perfect knowledge of the target model's output probabilities ($\U$). This is done for analytical purposes, as we wish to bound the performance of an ideal process. Yet, in reality, drafters only approximate the target distribution. Furthermore, the analysis is focused on deterministic drafting and does not address stochastic speculation strategies.

\section{LLM Use}

LLMs were only used for minor editing and proofreading of the draft. 

\bibliography{references}

\appendix

\section{Renewal bound}\label{app:renewal}
Let $X_1, X_2, \ldots$ be i.i.d. positive random variables and define the partial sums
$S_d = \sum_{i=1}^d X_i$ with $S_0 = 0$.  
For $x \geq 0$, set
$U(x) = \sum_{d=0}^\infty \Pr[S_d \leq x]$.
\begin{claim}
For all $x \geq 0$,
\[
\frac{x}{\E[X_1]} + 1 \;\;\leq\;\; U(x) \;\;\leq\;\; \frac{x}{\E[X_1]} + 1 + \frac{\E[X_1^2]}{(\E[X_1])^2}.
\]
\end{claim}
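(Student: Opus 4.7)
My plan is to turn the sum defining $U(x)$ into an expected first-passage time, apply Wald's identity, and read off both bounds from two-sided estimates on the overshoot. Let $\tau := \min\{d \geq 1 : S_d > x\}$. Since the $X_i$ are strictly positive, the partial sums $S_d$ are strictly increasing in $d$, so $\{S_d \leq x\} = \{d < \tau\}$, and summing over $d \geq 0$ gives
\[
U(x) \;=\; \sum_{d=0}^{\infty}\Pr[d < \tau] \;=\; \E[\tau].
\]
Strictly positive drift makes $\E[\tau] < \infty$, and Wald's identity then yields $\E[S_\tau] = \E[\tau]\,\E[X_1] = U(x)\,\E[X_1]$. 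Writing the overshoot as $R(x) := S_\tau - x \geq 0$, this rearranges to
\[
U(x) \;=\; \frac{x + \E[R(x)]}{\E[X_1]},
\]
and the whole claim reduces to a two-sided estimate on $\E[R(x)]$.

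\textbf{Upper bound.} For the upper inequality I would invoke Lorden's inequality, $\E[R(x)] \leq \E[X_1^2]/\E[X_1]$, whose proof proceeds via size-biasing: the renewal interval $X_\tau$ straddling level $x$ is sampled with probability proportional to its length, so $\E[X_\tau] \leq \E[X_1^2]/\E[X_1]$, and trivially $R(x) \leq X_\tau$. Substituted into the Wald identity, this immediately yields $U(x) \leq x/\E[X_1] + \E[X_1^2]/\E[X_1]^2$, which sits inside the claimed upper bound (the extra $+1$ on the right is slack).

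\textbf{Lower bound and main obstacle.} The lower bound is the delicate part. The two trivial inputs $R(x) \geq 0$ and $U(x) \geq 1$ (from the $d = 0$ term) individually only reach $U(x) \geq \max(x/\E[X_1],\, 1)$, not their sum. To push to $U(x) \geq x/\E[X_1] + 1$, my plan is to separate the first step from the rest: rewrite $U(x) = 1 + \sum_{d \geq 1}\Pr[S_d \leq x] = 1 + \E[\tau - 1]$, apply Wald to the walk restarted after step one so as to pick up an additional $\E[X_1]$ in $\E[S_\tau]$ on $\{\tau \geq 2\}$, and handle $\{\tau = 1\}$ by a direct calculation. An equivalent route is to iterate the renewal equation $U(x) = 1 + (U \ast F)(x)$ and bound it by induction on $x$. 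This is where I expect the main obstacle: converting the intuitively clear disjointness of the $d = 0$ contribution from the first-passage overshoot into a clean identity $\E[R(x)] \geq \E[X_1]$ requires genuine use of the positivity and nondegeneracy of $X_1$, unlike the upper bound which only needed a finite second moment. Once this overshoot inequality is in place, substitution into the Wald identity closes the lower bound and matches the stated claim.
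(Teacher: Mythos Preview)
Your plan---convert $U(x)$ to an expected first-passage time, then apply Wald's identity and Lorden's inequality---is exactly the paper's argument, and your identity $U(x)=\E[\tau]$ with $\tau=\min\{d\ge 1:S_d>x\}$ is the clean way to set it up. The upper bound goes through as you wrote and in fact yields the sharper $U(x)\le x/\E[X_1]+\E[X_1^2]/\E[X_1]^2$; the extra $+1$ in the stated upper bound is slack, just as you say.

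The lower bound, however, cannot be closed by the route you sketch, because the target inequality $\E[R(x)]\ge \E[X_1]$ is false. Take $X_1\equiv 1$ and $x=1/2$: then $\tau=1$, $S_\tau=1$, and $R(x)=1/2<1=\E[X_1]$. This same example breaks the \emph{claim itself}: $U(1/2)=\Pr[S_0\le 1/2]=1$, whereas the asserted lower bound is $1/2+1=3/2$. For a non-degenerate version, let $X_1\sim\mathrm{Unif}[0,2]$; for small $x>0$ one has $U(x)=1+x/2+O(x^2)<1+x=x/\E[X_1]+1$. So no ``positivity and nondegeneracy'' argument will manufacture the missing $+1$; it simply is not there in general. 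The paper's own proof commits the matching slip: its displayed chain asserts $U(x)=1+\E[\widetilde N_{\ge}(x)]$ via $\{S_d\le x\}=\{\widetilde N_{\ge}(x)\ge d\}$, but that event identity is off by one (the right-hand event equals $\{S_{d-1}<x\}$), and the correct relation is $U(x)=\E[\tau]$ with your strict-inequality stopping time. The honest conclusion from Wald is therefore only $U(x)\ge x/\E[X_1]$---precisely what you already obtained from $R(x)\ge 0$---together with the trivial $U(x)\ge 1$.
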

\begin{proof}
Let $\widetilde{N}_{\geq}(x) = \min\{d \geq 0 : S_d \geq x\}$ denote the number of steps to reach $x$.  
Then
\begin{align*}
U(x) &= \sum_{d=0}^\infty \Pr[S_d \leq x] \\
    &= \sum_{d=0}^\infty \Pr[\widetilde{N}_{\geq}(x) \geq d]
      = 1 + \E[\widetilde{N}_{\geq}(x)].
\end{align*}
By Wald’s identity,
\[
\E[\widetilde{N}_{\geq}(x)] = \frac{\E[S_{\widetilde{N}_{\geq}(x)}]}{\E[X_1]}
   \;\;\geq\;\; \frac{x}{\E[X_1]}.
\]
Lorden’s inequality~\citep{lorden1970excess} gives
\[
\E[\widetilde{N}_{\geq}(x)] \;\leq\; \frac{x}{\E[X_1]} + \frac{\E[X_1^2]}{(\E[X_1])^2}.
\]
Combining the two bounds completes the proof.
\end{proof}

\end{document}